\newtheorem{theorem}{Theorem}
\newtheorem{lemma}{Lemma}
\newcommand{\erf}[1]{\ensuremath{ \text{erf\!} \left( #1 \right)}}
\title{Center Smoothing: Certified Robustness for Networks with Structured Outputs}
\author{%
    Aounon Kumar\\
    University of Maryland\\
    \texttt{aounon@umd.edu}\\
    \And
    Tom Goldstein\\
    University of Maryland\\
    \texttt{tomg@cs.umd.edu}\\
}
\begin{document}

\maketitle

\begin{abstract}
The study of provable adversarial robustness has mostly been limited to classification tasks and models with one-dimensional real-valued outputs.
We extend the scope of certifiable robustness to problems with more general and structured outputs like sets, images, language, etc.
We model the output space as a metric space under a distance/similarity function, such as intersection-over-union, perceptual similarity, total variation distance, etc.
Such models are used in many machine learning problems like image segmentation, object detection, generative models, image/audio-to-text systems, etc.
Based on a robustness technique called randomized smoothing, our {\em center smoothing} procedure can produce models with the guarantee that the change in the output, as measured by the distance metric, remains small for any norm-bounded adversarial perturbation of the input.
We apply our method to create certifiably robust models with disparate output spaces -- from sets to images -- and show that it yields meaningful certificates without significantly degrading the performance of the base model.
Code for our experiments is available at: \url{https://github.com/aounon/center-smoothing}.
\end{abstract}

\section{Introduction}
The study of adversarial robustness in machine learning (ML) has gained a lot of attention ever since deep neural networks (DNNs) have been demonstrated to be vulnerable to adversarial attacks.
These attacks are generated by making tiny perturbations of the input that can completely alter a model's predictions \cite{Szegedy2014, MadryMSTV18, GoodfellowSS14, LaidlawF19}.
They can significantly degrade the performance of a model, like an image classifier, and make it output almost any class of the attacker's choice.
However, these attacks are not limited just to classification problems.
They have also been shown to exist for DNNs with structured outputs like text, images, probability distributions, sets, etc.
For instance, automatic speech recognition systems can be attacked with 100\% success rate to output any phrase of the attackers choice~\cite{carlini-wagner-audio-attacka}.
Similar attacks can cause neural image captioning systems to produce specific target captions with high success-rate~\cite{chen-img-cap-attack}.
Quality of image segmentation models have been shown to degrade severely under adversarial attacks~\cite{arnab-img-seg-atk-2017, he-biomed-seg-atk-2019, kang-2020}.
Facial recognition systems can be deceived to evade detection, impersonate authorized individuals and even render them completely ineffective \cite{vakhshiteh2020threat, song2018FR, frearson2020adversarial}.
Image reconstruction models have been targeted to introduce unwanted artefacts or miss important details, such as tumors in MRI scans, through adversarial inputs \cite{Antun2019, RajBL20, caliva2020adversarial, pmlr-v121-cheng20a}.
Super-resolution systems can be made to generate distorted images that can in turn deteriorate the performance of subsequent tasks that rely on the high-resolution outputs \cite{ChoiZKHL19, Yin2018}.
Deep neural network based policies in reinforcement learning problems also have been shown to succumb to imperceptible perturbations in the state observations \cite{GleaveDWKLR20, HuangPGDA17, BehzadanM17, PattanaikTLBC18}.
Such widespread presence of adversarial attacks is concerning as it threatens the use of deep neural networks in critical systems, such as facial recognition, self-driving vehicles, medical diagnosis, etc., where safety, security and reliability are of utmost importance.

Adversarial defenses have mostly focused on classification tasks \cite{KurakinGB17, BuckmanRRG18, GuoRCM18, DhillonALBKKA18, LiL17, GrosseMP0M17, GongWK17}.
Certified defenses based on convex-relaxation \cite{WongK18, Raghunathan2018, Singla2019, Chiang20, singla2020secondorder}, interval-bound propagation \cite{gowal2018effectiveness, HuangSWDYGDK19, dvijotham2018training, mirman18b} and randomized smoothing \cite{cohen19, LecuyerAG0J19, LiCWC19, SalmanLRZZBY19} that guarantee that the predicted class will remain the same in a certified region around the input point have also been studied.
Compared to empirical robustness methods that are often shown to be broken by stronger attacks \cite{Carlini017, athalye18a, UesatoOKO18}, procedures with provable robustness guarantees are of special importance to the study of robustness in ML as their guarantees hold regardless of improvements in attack strategies.
Among these approaches, certified defenses based on randomized smoothing have been show to scale up to high-dimensional inputs, such as images, and does not need to make assumptions about the underlying model.
The robustness certificates produced by these defenses are probabilistic, meaning that they hold with high probability and not absolute certainty.

Unlike classification problems, where certificates guarantee that the predicted class remains unchanged under bounded-size perturbations, it is not immediately obvious what the goal of robustness should be for problems with structured outputs like images, text, sets, etc.
While accuracy is the standard quality measure for classification, more complex tasks may require other quality metrics like total variation for images, intersection over union for object localization, earth-mover distance for distributions, etc.
In general, neural networks can be cast as functions of the type $f: \mathbb{R}^k \rightarrow (M, d)$ which map a $k$ dimensional real-valued space into a metric space $M$ with distance function $d: M \times M \rightarrow \mathbb{R}_{\geq 0}$.
In this work, we design a randomized smoothing based technique to obtain provable robustness for functions of this type with minimal assumptions on the distance metric $d$.
We generate a robust version $\bar{f}$ such that the change in its output, as measured by $d$, is small for a small change in its input.
More formally, given an input $x$ and an $\ell_2$-perturbation size $\epsilon_1$, we produce a value $\epsilon_2$ with the guarantee that, with high probability,
\vspace{-2mm}
\[\forall x' \text{ s.t. } \|x - x'\|_2 \leq \epsilon_1, \; d(\bar{f}(x), \bar{f}(x')) \leq \epsilon_2.\]

\begin{wrapfigure}{r}{0.45\textwidth}
\vspace{-4mm}
\centering
\includegraphics[width=0.45\textwidth, trim={8cm 0 5cm 0}, clip]{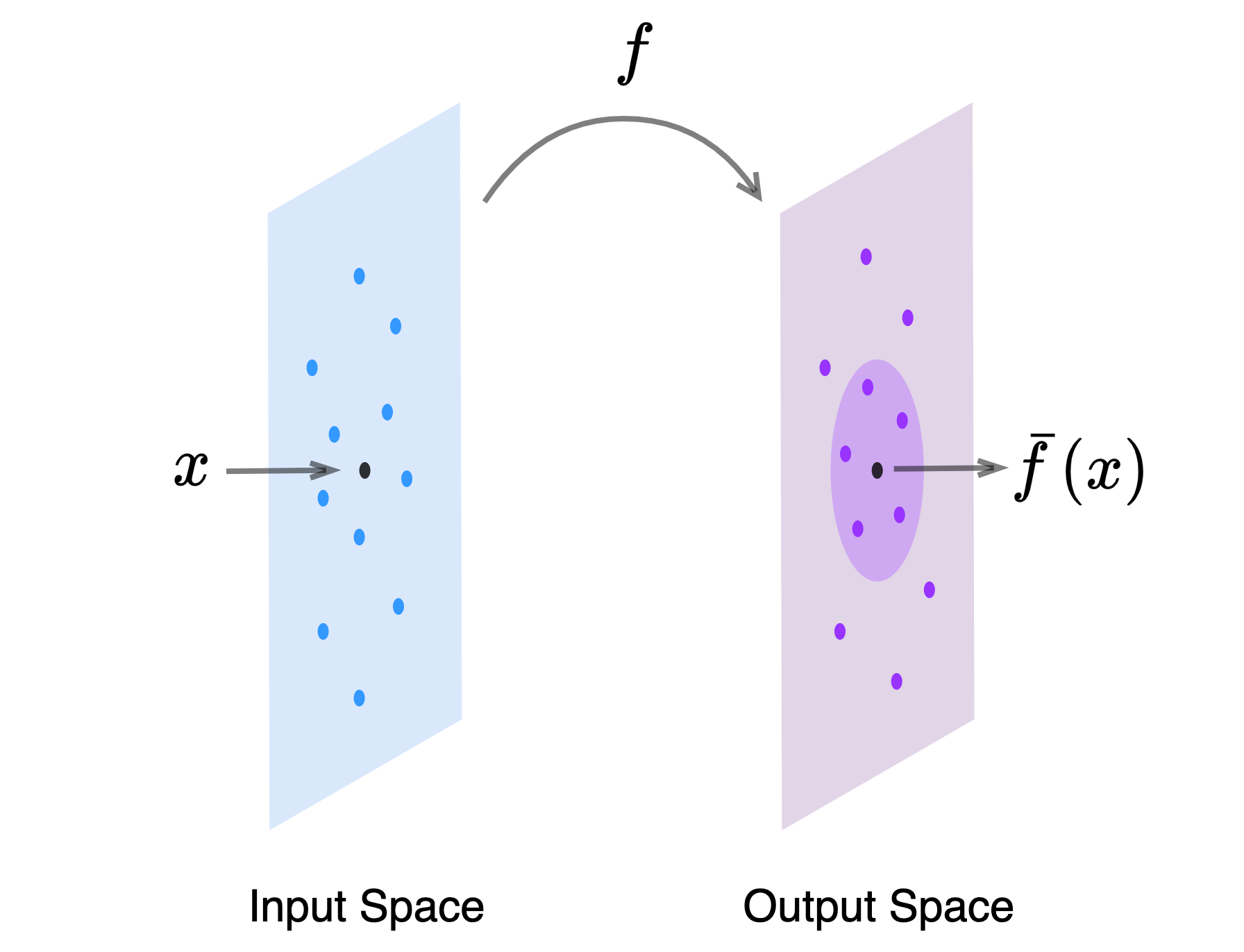}
\caption{Center smoothing.}
\vspace{-4mm}
\label{fig:cen_smoothing}
\end{wrapfigure}

\textbf{Our contributions:} We develop {\it center smoothing}, a procedure to make functions like $f$ provably robust against adversarial attacks.
For a given input $x$, center smoothing samples a collection of points in the neighborhood of $x$ using a Gaussian smoothing distribution, computes the function $f$ on each of these points and returns the center of the smallest ball enclosing at least half the points in the output space (see figure~\ref{fig:cen_smoothing}).
Computing the minimum enclosing ball in the output space is equivalent to solving the 1-center problem with outliers (hence the name of our procedure), which is an NP-complete problem for a general metric~\cite{SHENMAIER201581}.
We approximate it by computing the point that has the smallest median distance to all the other points in the sample.
We show that the output of the smoothed function is robust to input perturbations of bounded $\ell_2$-size.
We restrict the input perturbations to be inside an $\ell_2$-ball as the main focus of this work is on the output space of $f$.
However, our method does not critically rely on the $\ell_2$ threat model or Gaussian smoothing noise, and can be adapted to other perturbations types and smoothing distributions.
Although we define the output space as a metric, our proofs only require the symmetry property and triangle inequality to hold.
Thus, center smoothing can also be applied to pseudometric distances that need not satisfy the identity of indiscernibles.
Many distances defined for images, such as total variation, cosine distance, perceptual distances, etc., fall under this category.  Center smoothing steps outside the world of $\ell_p$ metrics, and  certifies robustness in metrics like IoU/Jaccard distance for object localization, and total-variation, which is a good measure of perceptual similarity for images.
In our experiments, we show that this method can produce meaningful certificates for a wide variety of output metrics without significantly compromising the quality of the base model.

\textbf{Related Work:}
Randomized smoothing has been extensively used for provable adversarial robustness in the classification setting to defend against different $\ell_p$ \cite{cohen19, LecuyerAG0J19, SalmanLRZZBY19, teng2020ell, li2019secondorder, levine2020tight, LeeYCJ19, LevineF20aaai} and non-$\ell_p$ \cite{levine2019wasserstein, Levine2020patch} threat models.
Beyond classification, it has also been used for certifying the median output of regression models~\cite{chiang2020detection} and the expected softmax scores of neural networks \cite{Kumar0FG20}.
Smoothing a bounded vector-valued function by taking the mean of the output vectors has been shown to have a bounded Lipschitz constant when both input and output spaces are $\ell_2$-metrics \cite{Wolf2019}.
Center smoothing does not require the base function to be bounded because the minimum enclosing ball is resistant to outliers.
Moving an outlier point away from this ball does not affect the output of the smoothed function.
On the other hand, smoothing techniques that compute the mean of the output samples are more susceptible to outliers as changing any of the samples can alter the mean.
Recently, a provable defense for segmentation tasks was developed by certifying each individual pixel of the output using randomized smoothing~\cite{fischer2021}.
Due to the accumulating uncertainty over individual certifications, it is difficult to produce guarantees for large images, often leading to certified outputs with ambiguous pixels.
Center smoothing bypasses this challenge by directly certifying the similarity between a clean segmentation output and an adversarial one under a metric such as intersection over union.
\vspace{-1mm}

\section{Preliminaries and Notations}
\label{sec:notations}
\vspace{-1mm}
Given a function $f: \mathbb{R}^k \rightarrow (M, d)$ and a distribution $\mathcal{D}$ over the input space $\mathbb{R}^k$, let $f(\mathcal{D})$ denote the probability distribution of the output of $f$ in $M$ when the input is drawn from $\mathcal{D}$.
For a point $x \in \mathbb{R}^k$, let $x + \mathcal{P}$ denote the probability distribution of the points $x + \delta$ where $\delta$ is a smoothing noise drawn from a distribution $\mathcal{P}$ over $\mathbb{R}^k$ and let $X$ be the random variable for $x + \mathcal{P}$.
For elements in $M$, define $\mathcal{B}(z, r) = \{z' \mid d(z, z') \leq r\}$ as a ball of radius $r$ centered at $z$.
Define a smoothed version of $f$ under $\mathcal{P}$ as the center of the ball with the smallest radius in $M$ that encloses at least half of the probability mass of $f(x + \mathcal{P})$, i.e.,
\vspace{-2mm}
\[\bar{f}_{\mathcal{P}}(x) = \underset{z}{\arg\!\min} \; r \; \text{s.t.} \; \mathbb{P} [f(X) \in \mathcal{B}(z, r)] \geq \frac{1}{2}. \]
If there are multiple balls with the smallest radius satisfying the above condition, return one of the centers arbitrarily.
Let $r^*_\mathcal{P}(x)$ be the value of the minimum radius.
Hereafter, we ignore the subscripts and superscripts in the above definitions whenever they are obvious from context.
In this work, we sample the noise vector $\delta$ from an i.i.d Gaussian distribution of variance $\sigma^2$ in each dimension, i.e., $\delta \sim \mathcal{N}(0, \sigma^2 I)$.


\subsection{Gaussian Smoothing}
\citeauthor{cohen19} in \citeyear{cohen19} showed that a classifier $h: \mathbb{R}^k \rightarrow \mathcal{Y}$ smoothed with a Gaussian noise $\mathcal{N}(0, \sigma^2 I)$ as,
\[\bar{h}(x) = \underset{c \in \mathcal{Y}}{\text{argmax}} \; \mathbb{P}\left[ h(x + \delta) = c \right],\]
where $\mathcal{Y}$ is a set of classes, is certifiably robust to small perturbations in the input. Their certificate relied on the fact that, if the probability of sampling from the top class at $x$ under the smoothing distribution is $p$, then for an $\ell_2$ perturbation of size at most $\epsilon$, the probability of the top class is guaranteed to be at least
\begin{align}
\label{eq:cohen_bnd}
p_\epsilon = \Phi ( \Phi^{-1} (p) - \epsilon / \sigma),
\end{align}
where $\Phi$ is the CDF of the standard normal distribution $\mathcal{N}(0, 1)$.
This bound applies to any $\{0, 1\}$-function over the input space $\mathbb{R}^k$, i.e., if $\mathbb{P}[h(x) = 1] = p$, then for any $\epsilon$-size perturbation $x', \mathbb{P}[h(x') = 1] \geq p_\epsilon$.

We use this bound to generate robustness certificates for center smoothing.
We identify a ball $\mathcal{B}(\bar{f}(x), R)$ of radius $R$ enclosing a very high probability mass of the output distribution. One can define a function that outputs one if $f$ maps a point to inside $\mathcal{B}(\bar{f}(x), R)$ and zero otherwise.
The bound in~(\ref{eq:cohen_bnd}) gives us a region in the input space such that for any point inside it, at least half of the mass of the output distribution is enclosed in $\mathcal{B}(\bar{f}(x), R)$. We show in section~\ref{sec:center-smoothing} that the output of the smoothed function for a perturbed input is guaranteed to be within a constant factor of $R$ from the output of the original input.

\section{Center Smoothing}
\label{sec:center-smoothing}
As defined in section~\ref{sec:notations}, the output of $\bar{f}$ is the center of the smallest ball in the output space that encloses at least half the probability mass of the $f(x + \mathcal{P})$. Thus, in order to significantly change the output, an adversary has to find a perturbation such that a majority of the neighboring points map far away from $\bar{f}(x)$.
However, for a function that is roughly accurate on most points around $x$, a small perturbation in the input cannot change the output of the smoothed function by much, thereby making it robust.

For an $\ell_2$ perturbation size of $\epsilon_1$ of an input point $x$, let $R$ be the radius of a ball around $\bar{f}(x)$ that encloses more than half the probability mass of $f(x' + \mathcal{P})$ for all $x'$ satisfying $\|x-x'\|_2 \leq \epsilon_1$, i.e.,
\begin{align}
\label{eq:big_R}
\forall x' \text{ s.t. } \|x-x'\|_2 \leq \epsilon_1, \; \mathbb{P} [f(X') \in \mathcal{B}(\bar{f}(x), R)] > \frac{1}{2},
\end{align}
where  $X' \sim x' + \mathcal{P}$. Basically, $R$ is the radius of a ball around $\bar{f}(x)$ that contains at least half the probability mass of $f(x' + \mathcal{P})$ for any $\epsilon_1$-size perturbation $x'$ of $x$.
Then, we have the following robustness guarantee on $\bar{f}$:

\begin{theorem}
\label{thm:main-exact}
For all $x'$ such that $\|x - x'\|_2 \leq \epsilon_1$,
\[ d(\bar{f}(x), \bar{f}(x')) \leq 2R.\]
\end{theorem}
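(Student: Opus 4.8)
The plan is to exploit the defining optimality of the minimum enclosing ball at both $x$ and $x'$, together with the triangle inequality, to control $d(\bar f(x), \bar f(x'))$. First I would observe that by hypothesis~(\ref{eq:big_R}), the ball $\mathcal{B}(\bar f(x), R)$ contains strictly more than half the probability mass of $f(X')$ for every $x'$ with $\|x-x'\|_2 \le \epsilon_1$. In particular, $R$ is a \emph{feasible} radius for the optimization problem defining $\bar f(x')$, so by minimality $r^*(x') \le R$; that is, the smallest ball enclosing at least half the mass of $f(X')$, namely $\mathcal{B}(\bar f(x'), r^*(x'))$, has radius at most $R$.

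The key step is then a mass-overlap argument. The ball $\mathcal{B}(\bar f(x), R)$ carries more than $1/2$ of the mass of $f(X')$, and the ball $\mathcal{B}(\bar f(x'), r^*(x')) \subseteq \mathcal{B}(\bar f(x'), R)$ also carries at least $1/2$ of the mass of $f(X')$ (by definition of $\bar f(x')$ and the bound $r^*(x') \le R$). Since these two fractions sum to strictly more than $1$, the two balls cannot be disjoint: there exists a point $z$ in $M$ with $d(z, \bar f(x)) \le R$ and $d(z, \bar f(x')) \le R$. Applying the triangle inequality,
\[
d(\bar f(x), \bar f(x')) \le d(\bar f(x), z) + d(z, \bar f(x')) \le R + R = 2R,
\]
which is exactly the claimed bound. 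Note this argument uses only symmetry of $d$ and the triangle inequality, consistent with the paper's stated generality (pseudometrics are fine).

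The main obstacle — really the only subtlety — is making the strict-versus-nonstrict inequalities line up so that the two balls are forced to overlap: one needs one of the two mass bounds to be \emph{strict} (here the $>1/2$ in~(\ref{eq:big_R})) while the other is merely $\ge 1/2$, so that their sum exceeds the total mass $1$ and disjointness is impossible. If both were only $\ge 1/2$, the balls could meet only "at the boundary in measure" and a nonempty intersection would not be guaranteed; the strict inequality in the definition of $R$ is precisely what rules this out. A secondary point to state cleanly is that $R$ is genuinely attainable as a feasible radius at $x'$ (so that the $\arg\!\min$ defining $\bar f(x')$ indeed returns a ball of radius $\le R$); this is immediate from~(\ref{eq:big_R}) but worth noting since the infimum in the definition is over all centers $z$ and all radii $r$.
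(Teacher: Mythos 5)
Your proof is correct and follows essentially the same route as the paper's: establish $r^*(x') \leq R$ by feasibility of $R$ for the minimization at $x'$, force the two balls to intersect because their masses under $f(x'+\mathcal{P})$ sum to strictly more than one, and finish with the triangle inequality. Your explicit handling of the strict-versus-nonstrict inequality is a welcome clarification of a point the paper leaves implicit.
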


\begin{proof}
Consider the balls $\mathcal{B}(\bar{f}(x'), r^*(x'))$ and $\mathcal{B}(\bar{f}(x), R)$ (see figure~\ref{fig:cen_cert}). From the definition of $r^*(x')$ and $R$, we know that the sum of the probability masses of $f(x' + \mathcal{P})$ enclosed by the two balls must be strictly greater than one.
Thus, they must have an element $y$ in common. Since $d$ satisfies the triangle inequality, we have:
\begin{align*}
    d(\bar{f}(x), \bar{f}(x')) & \leq d(\bar{f}(x), y) + d(y, \bar{f}(x'))\\
    & \leq R + r^*(x').
\end{align*}
Since, the ball $\mathcal{B}(\bar{f}(x), R)$ encloses more than half of the probability mass of $f(x+\mathcal{P})$, the minimum ball with at least half the probability mass cannot have a radius greater than $R$, i.e., $r^*(x') \leq R$. Therefore, $d(\bar{f}(x), \bar{f}(x')) \leq 2R$.
\end{proof}

\begin{wrapfigure}{r}{0.45\textwidth}
\centering
\includegraphics[width=0.45\textwidth]{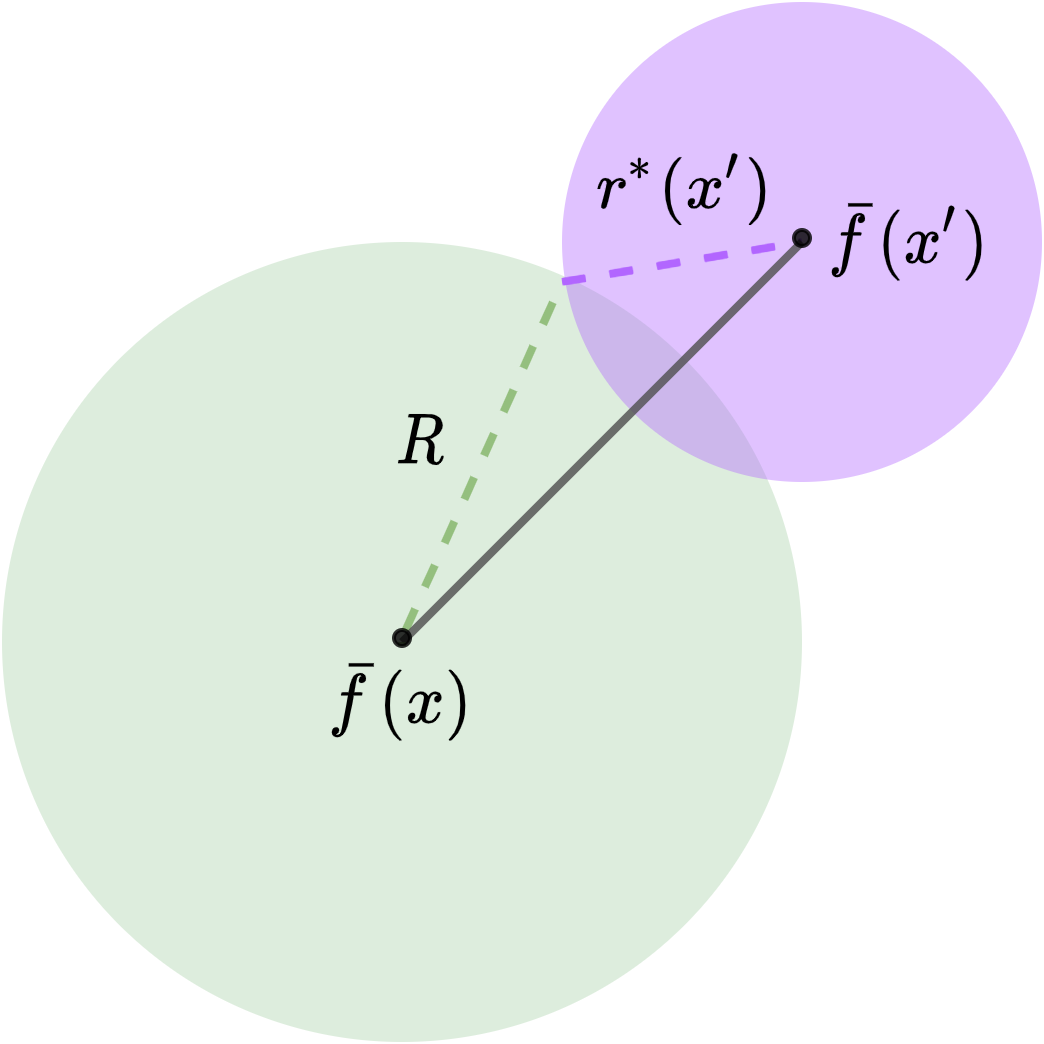}
\caption{Robustness guarantee.}
\label{fig:cen_cert}
\end{wrapfigure}

The above result, in theory, gives us a smoothed version of $f$ with a provable guarantee of robustness. However, in practice, it may not be feasible to obtain $\bar{f}$ just from samples of $f(x + \mathcal{P})$.
Instead, we will use some procedure that approximates the smoothed output with high probability.
For some $\Delta \in [0, 1/2]$, let $\hat{r}(x, \Delta)$ be the radius of the smallest ball that encloses at least $1/2+\Delta$ probability mass of $f(x+\mathcal{P})$, i.e.,
\vspace{-2mm}
\[\hat{r}(x, \Delta) = \underset{z'}{\min} \; r \; \text{s.t.} \; \mathbb{P} [f(X) \in \mathcal{B}(z', r)] \geq \frac{1}{2} + \Delta. \]
Now define a probabilistic approximation $\hat{f}(x)$ of the smoothed function $\bar{f}$ to be a point $z \in M$, which with probability at least $1-\alpha_1$ (for $\alpha_1 \in [0, 1]$), encloses at least $1/2-\Delta$ probability mass of $f(x + \mathcal{P})$ within a ball of radius $\hat{r}(x, \Delta)$.
Formally, $\hat{f}(x)$ is a point $z \in M$, such that, with at least $1-\alpha_1$ probability,
\vspace{-2mm}
\[\mathbb{P}\left[ f(X) \in \mathcal{B}(z, \hat{r}(x, \Delta))\right] \geq \frac{1}{2} - \Delta.\]

\vspace{-1mm}
Defining $\hat{R}$ to be the radius of a ball centered at $\hat{f}(x)$ that satisfies:
\begin{align}
\label{eq:R_hat}
\forall x' \text{ s.t. } \|x - x'\|_2 \leq \epsilon_1, \; \mathbb{P} [f(X') \in \mathcal{B}(\hat{f}(x), \hat{R})] > \frac{1}{2} + \Delta,
\end{align}
we can write a probabilistic version of theorem~\ref{thm:main-exact},

\begin{theorem}
\label{thm:main-prob}
With probability at least $1-\alpha_1$,
\[\forall x' \text{ s.t. } \|x - x'\|_2 \leq \epsilon_1, \; d(\hat{f}(x), \hat{f}(x')) \leq 2\hat{R},\]
\end{theorem}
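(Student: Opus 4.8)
The plan is to re-run the argument of Theorem~\ref{thm:main-exact} essentially verbatim, replacing the idealized objects $\bar f(x), r^*(x'), R$ by their realizable surrogates $\hat f(x), \hat r(x',\Delta), \hat R$, and to pay for this substitution exactly with the failure probability $\alpha_1$ that is already built into the definition of $\hat f$. Fix an arbitrary $x'$ with $\|x-x'\|_2\le\epsilon_1$ and write $X'\sim x'+\mathcal{P}$. First I would condition on the good event in the definition of $\hat f(x')$, which holds with probability at least $1-\alpha_1$: on it, $\mathbb{P}[f(X')\in\mathcal{B}(\hat f(x'),\hat r(x',\Delta))]\ge \tfrac12-\Delta$. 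By the definition~(\ref{eq:R_hat}) of $\hat R$ we also have $\mathbb{P}[f(X')\in\mathcal{B}(\hat f(x),\hat R)]>\tfrac12+\Delta$. Since the two masses sum to strictly more than $(\tfrac12-\Delta)+(\tfrac12+\Delta)=1$, the two balls cannot be disjoint, so they share a point $y$.

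From here the geometry is identical to Theorem~\ref{thm:main-exact}: the triangle inequality gives $d(\hat f(x),\hat f(x'))\le d(\hat f(x),y)+d(y,\hat f(x'))\le\hat R+\hat r(x',\Delta)$, so it remains to show $\hat r(x',\Delta)\le\hat R$. This holds because $\mathcal{B}(\hat f(x),\hat R)$ is itself a ball of radius $\hat R$ whose $f(X')$-mass exceeds $\tfrac12+\Delta$, while $\hat r(x',\Delta)$ is by definition the smallest radius of any ball with $f(X')$-mass at least $\tfrac12+\Delta$; hence $\hat r(x',\Delta)\le\hat R$ and $d(\hat f(x),\hat f(x'))\le 2\hat R$. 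The only randomness invoked was the single good event for the perturbed-point computation, so the bound holds with probability at least $1-\alpha_1$; and since nothing in the argument used $x'$ beyond $\|x-x'\|_2\le\epsilon_1$, this gives the stated guarantee uniformly over all such $x'$.

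The step that actually requires care — and the only real deviation from Theorem~\ref{thm:main-exact} — is the bookkeeping of the $\pm\Delta$ margins. In the exact case the two enclosing balls carry masses $\ge\tfrac12$ and $>\tfrac12$, so ``sum exceeds one'' is automatic; here the $1-\alpha_1$-reliable ball around $\hat f(x')$ carries only $\tfrac12-\Delta$, so the overlap argument closes only because $\hat R$ in~(\ref{eq:R_hat}) and $\hat r(\cdot,\Delta)$ were both defined at the matching inflated level $\tfrac12+\Delta$ rather than $\tfrac12$. Checking that these three thresholds are mutually consistent, and that a finite $\hat R$ satisfying~(\ref{eq:R_hat}) can in fact be produced from samples (e.g.\ by applying the Gaussian-smoothing bound~(\ref{eq:cohen_bnd}) to the indicator of $\mathcal{B}(\hat f(x),\cdot)$ and inflating the radius until the perturbed mass clears $\tfrac12+\Delta$), is where the substantive work lies; the enclosing-ball core carries over unchanged.
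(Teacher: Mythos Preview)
Your proposal is correct and follows essentially the same argument as the paper's own proof: invoke the $(\tfrac12-\Delta)$-mass ball around $\hat f(x')$ from the definition of $\hat f$, the $(\tfrac12+\Delta)$-mass ball around $\hat f(x)$ from the definition of $\hat R$, conclude they intersect, apply the triangle inequality, and finish via $\hat r(x',\Delta)\le \hat R$. Your closing paragraph on the $\pm\Delta$ bookkeeping and on producing $\hat R$ from samples is useful commentary but goes beyond what the paper's proof of this theorem actually contains.
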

The proof of this theorem is in the appendix, and logically parallels the proof of theorem~\ref{thm:main-exact}.

\subsection{Computing $\hat{f}$}
For an input $x$ and a given value of $\Delta$, sample $n$ points independently from a Gaussian distribution $x + \mathcal{N}(0, \sigma^2 I)$ around the point $x$ and compute the function $f$ on each of these points. Let $Z = \{z_1, z_2, \ldots, z_n\}$ be the set of $n$ samples of $f(x + \mathcal{N}(0, \sigma^2 I))$ produced in the output space. Compute the minimum enclosing ball $\mathcal{B}(z, r)$ that contains at least half of the points in $Z$.
The following lemma bounds the radius $r$ of this ball by the radius of the smallest ball enclosing at least $1/2 + \Delta_1$ probability mass of the output distribution (proof in appendix).

\begin{lemma}
\label{lem:radius-bnd}
With probability at least $1 - e^{-2n\Delta_1^2}$,
\[r \leq \hat{r}(x, \Delta_1).\]
\end{lemma}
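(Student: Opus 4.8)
The plan is to produce, with high probability, an explicit ball of radius at most $\hat r(x, \Delta_1)$ that is a feasible competitor in the empirical minimum-enclosing-ball problem, and then conclude that the empirical optimum $r$ cannot exceed $\hat r(x, \Delta_1)$. First I would fix $z^*$ to be a center attaining the population minimum that defines $\hat r(x, \Delta_1)$, so that $\mathbb{P}[f(X) \in \mathcal{B}(z^*, \hat r(x, \Delta_1))] \ge 1/2 + \Delta_1$. Since $z_1, \dots, z_n$ are i.i.d.\ samples of $f(x + \mathcal{N}(0, \sigma^2 I))$, the indicators $Y_i = \mathbf{1}[z_i \in \mathcal{B}(z^*, \hat r(x, \Delta_1))]$ are i.i.d.\ Bernoulli with mean $p := \mathbb{P}[f(X) \in \mathcal{B}(z^*, \hat r(x, \Delta_1))] \ge 1/2 + \Delta_1$, and $N := \sum_i Y_i$ counts how many sampled outputs land in this fixed ball.

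Next I would apply Hoeffding's inequality to the empirical mean $\bar Y = N/n$. Because $p - \Delta_1 \ge 1/2$, the event that fewer than half of the samples fall in the ball satisfies $\{\bar Y < 1/2\} \subseteq \{\bar Y \le p - \Delta_1\}$, and Hoeffding gives $\mathbb{P}[\bar Y \le p - \Delta_1] \le e^{-2n\Delta_1^2}$. Hence, with probability at least $1 - e^{-2n\Delta_1^2}$, at least half of the points of $Z$ lie inside $\mathcal{B}(z^*, \hat r(x, \Delta_1))$.

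Finally, on that event the ball $\mathcal{B}(z^*, \hat r(x, \Delta_1))$ is a feasible solution to the optimization that defines $r$ (the smallest radius of a ball enclosing at least half the points of $Z$), so $r \le \hat r(x, \Delta_1)$, which is exactly the claim.

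The only delicate point is the bookkeeping around the ``at least half'' threshold (strict versus non-strict inequality, and whether $n$ is even or odd) together with getting the direction of the Hoeffding bound right; there is no real analytic obstacle, since we only need an upper bound on $r$ and therefore only need to exhibit one sufficiently small feasible ball rather than characterize the empirical minimizer itself.
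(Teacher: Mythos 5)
Your proof is correct and matches the paper's argument: both fix the population-optimal ball of radius $\hat{r}(x,\Delta_1)$, apply Hoeffding's inequality to conclude that at least half the sampled points land in it with probability at least $1 - e^{-2n\Delta_1^2}$, and then use that ball as a feasible competitor to bound $r$. Your version just spells out the indicator-variable bookkeeping that the paper leaves implicit.
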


Now, sample a fresh batch of $n$ random points.
Let $p_{\Delta_1} = \rho - \Delta_1$, where $\rho$ is the fraction of points that fall inside $\mathcal{B}(z, r)$.
Then, by Hoeffding's inequality, with probability at least $1 - e^{-2n\Delta_1^2}$,
\[\mathbb{P}\left[ f(X) \in \mathcal{B}(z, r) \right] \geq p_{\Delta_1}.\]
Let $\Delta_2 = 1/2 - p_{\Delta_1}$.
If $\max (\Delta_1, \Delta_2) \leq \Delta$, the point $z$ satisfies the conditions in the definition of $\hat{f}$, with at least $1 - 2e^{-2n\Delta_1^2}$ probability.
If $\max (\Delta_1, \Delta_2) > \Delta$, discard the computed center $z$ and abstain.
In our experiments, we select $\Delta_1, n$ and $\alpha_1$ appropriately so that the above process succeeds easily.

Computing the minimum enclosing ball $\mathcal{B}(z, r)$ exactly can be computationally challenging, as for certain metrics, it is known to be NP-complete \cite{SHENMAIER201581}.
Instead, we approximate it by computing a ball $\beta\text{-MEB}(Z, 1/2)$ that contains at least half the points in $Z$, but has
a radius that is within a $\beta$ factor of the optimal radius $r$. We modify theorem~\ref{thm:main-exact} to account for this approximation (see appendix for proof).

\begin{figure}[t]
\begin{minipage}{.49\linewidth}
\begin{algorithm}[H]
\caption{Smooth}
\label{alg:smooth}
    \begin{algorithmic}
       \STATE {\bfseries Input:} $x \in \mathbb{R}^k, \sigma, \Delta, \alpha_1$.\\
       \STATE {\bfseries Output:} $z \in M$.\\
       \vspace{0.5mm}
       \STATE Set $Z = \{z_i\}_{i=1}^n \text{ s.t. } z_i \sim f(x + \mathcal{N}(0, \sigma^2 I))$.
       \vspace{0.5mm}
       \STATE Set $\Delta_1 = \sqrt{ \ln \left( 2 / \alpha_1 \right) / 2n}$.
       \vspace{0.5mm}
       \STATE Compute $ z = \beta$-MEB$(Z, 1/2)$.
       \STATE Re-sample $Z$.
       \vspace{0.5mm}
       \STATE Compute $p_{\Delta_1}$.
       \vspace{0.5mm}
       \STATE Set $\Delta_2 = 1/2 - p_{\Delta_1}$.
       \vspace{0.5mm}
       \STATE If $\Delta < \max (\Delta_1, \Delta_2)$, discard $z$ and abstain.
    \end{algorithmic}
\end{algorithm}
\end{minipage}
\hfill
\begin{minipage}{.49\linewidth}
\begin{algorithm}[H]
\caption{Certify}
\label{alg:certify}
    \begin{algorithmic}
       \STATE {\bfseries Input:} $x \in \mathbb{R}^k, \epsilon_1, \sigma, \Delta, \alpha_1, \alpha_2$.\\
       \STATE {\bfseries Output:} $\epsilon_2 \in \mathbb{R}$.\\
       \STATE Compute $\hat{f}(x)$ using algorithm~\ref{alg:smooth}.
       \vspace{0.3mm}
       \STATE Set $Z = \{z_i\}_{i=1}^m \text{ s.t. } z_i \sim f(x + \mathcal{N}(0, \sigma^2 I))$.
       \STATE Compute $\Tilde{\mathcal{R}} = \{d(\hat{f}(x), f(z_i)) \mid z_i \in Z\}$.
       \STATE Set $p = \Phi ( \Phi^{-1} ( 1/2 + \Delta) + \epsilon_1 / \sigma )$.
       \STATE Set $q = p + \sqrt{ \ln ( 1 / \alpha_2 ) / 2m}$.
       \STATE Set $\hat{R} = q$th-quantile of $\Tilde{\mathcal{R}}$.
       \STATE Set $\epsilon_2 = (1+\beta) \hat{R}$.
    \end{algorithmic}
\end{algorithm}
\end{minipage}
\vspace{-4mm}
\end{figure}

\begin{theorem}
\label{thm:main-approx-prob}
With probability at least $1-\alpha_1$,
\[\forall x' \text{ s.t. } \|x - x'\|_2 \leq \epsilon_1, \; d(\hat{f}(x), \hat{f}(x')) \leq (1 + \beta)\hat{R}\]
where $\alpha_1 = 2e^{-2n\Delta_1^2}$.
\end{theorem}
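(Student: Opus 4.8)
The plan is to replay the proof of Theorem~\ref{thm:main-exact} (equivalently Theorem~\ref{thm:main-prob}), but with the exact minimum enclosing balls replaced by the $\beta$-approximate ones returned by Algorithm~\ref{alg:smooth}, and to carry the extra factor $\beta$ through the triangle-inequality step. Fix an arbitrary $x'$ with $\|x-x'\|_2 \le \epsilon_1$. As in Theorem~\ref{thm:main-exact}, the whole argument rests on exhibiting two balls in $M$ whose enclosed probability masses of $f(x'+\mathcal{P})$ sum to strictly more than one, so that they must share a point $y$, and then bounding $d(\hat f(x),\hat f(x'))$ by the sum of their radii.

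For the first ball I would take $\mathcal{B}(\hat f(x),\hat R)$: by the defining property~(\ref{eq:R_hat}) of $\hat R$ it contains strictly more than $1/2+\Delta$ of the mass of $f(x'+\mathcal{P})$, and this is a deterministic consequence of the definition once $\hat f(x)$ is fixed. For the second ball I would take the ball $\mathcal{B}(\hat f(x'),r'_\beta)$ that Algorithm~\ref{alg:smooth} produces at input $x'$, where $r'_\beta$ is its ($\beta$-approximate) radius; by the analysis in Section~\ref{sec:center-smoothing} (the fresh-batch Hoeffding bound together with the non-abstention condition, which forces $\max(\Delta_1,\Delta_2)\le\Delta$), this ball encloses at least $1/2-\Delta$ of the mass of $f(x'+\mathcal{P})$ with probability at least $1-e^{-2n\Delta_1^2}$. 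Since $(1/2+\Delta)+(1/2-\Delta)=1$ and the first inequality is strict, the two balls overlap in a set of positive mass, and I pick $y$ in it. The triangle inequality for $d$ then gives $d(\hat f(x),\hat f(x'))\le d(\hat f(x),y)+d(y,\hat f(x'))\le \hat R + r'_\beta$, exactly mirroring Theorem~\ref{thm:main-exact}.

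The genuinely new step is to show $r'_\beta\le\beta\hat R$. Applying Lemma~\ref{lem:radius-bnd} at $x'$, the exact minimum ball containing half the $n$ samples at $x'$ has radius at most $\hat r(x',\Delta_1)$ with probability at least $1-e^{-2n\Delta_1^2}$, hence the $\beta$-MEB radius obeys $r'_\beta\le\beta\,\hat r(x',\Delta_1)$. Because the abstention test guarantees $\Delta_1\le\Delta$ and $\hat r(x',\cdot)$ is nondecreasing, $\hat r(x',\Delta_1)\le\hat r(x',\Delta)$; and since $\mathcal{B}(\hat f(x),\hat R)$ already captures more than $1/2+\Delta$ of the mass of $f(x'+\mathcal{P})$, it is a feasible ball in the minimization that defines $\hat r(x',\Delta)$, so $\hat r(x',\Delta)\le\hat R$. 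Chaining these bounds yields $r'_\beta\le\beta\hat R$, and therefore $d(\hat f(x),\hat f(x'))\le\hat R+\beta\hat R=(1+\beta)\hat R$. The two high-probability events invoked — the fresh-batch Hoeffding bound on the enclosed mass and the radius bound of Lemma~\ref{lem:radius-bnd} — each fail with probability at most $e^{-2n\Delta_1^2}$, so by a union bound the conclusion holds with probability at least $1-2e^{-2n\Delta_1^2}=1-\alpha_1$, matching the statement.

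I expect the delicate point to be the probabilistic bookkeeping rather than any inequality. One must be precise about which source of randomness the phrase ``with probability at least $1-\alpha_1$'' refers to (the smoothing noise used to evaluate $\hat f$ at the relevant input, as in the definition of $\hat f$ and in Theorem~\ref{thm:main-prob}), check that the pointwise-in-$x'$ derivation above does not covertly require a union bound over the continuum of admissible perturbations, and observe that the strict inequality in~(\ref{eq:R_hat}) is precisely what rescues the borderline case in which the two mass fractions add up to exactly one.
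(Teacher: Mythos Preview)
Your proposal is correct and follows essentially the same route as the paper's proof: both intersect the ball $\mathcal{B}(\hat f(x),\hat R)$ with the ball around $\hat f(x')$ of radius at most $\beta\hat r(x',\Delta)$, apply the triangle inequality, and then use $\hat r(x',\Delta)\le\hat R$; the two probabilistic events (Lemma~\ref{lem:radius-bnd} and the fresh-batch Hoeffding bound) are combined by a union bound to yield $\alpha_1=2e^{-2n\Delta_1^2}$. The only cosmetic difference is that you bound the empirical radius $r'_\beta$ first and plug it into the triangle inequality, whereas the paper enlarges the ball at $x'$ to radius $\beta\hat r(x',\Delta)$ before intersecting --- the chain of inequalities is identical.
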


We use a simple approximation that works for all metrics and achieves an approximation factor of two, producing a certified radius of $3 \hat{R}$. It computes a point from the set $Z$, instead of a general point in $M$, that has the minimum median distance from all the points in the set (including itself).
This can be achieved using $O(n^2)$ pair-wise distance computations.
To see how the factor 2-approximation is achieved, consider the optimal ball with radius $r$.
By triangle inequality of $d$, each pair of points is at most $2r$ distance from each other. Thus, a ball with radius $2r$, centered at any one of these points will cover every other point in the optimal ball.
Better approximations can be obtained for specific norms, e.g., there exists a $(1 + \epsilon)$-approximation algorithm for the $\ell_2$ norm \cite{approx-core-set-2002}. 
For graph distances or when the support of the output distribution is a small discrete set of points, the optimal radius can be computed exactly using the above algorithm.
The smoothing procedure is outlined in algorithm~\ref{alg:smooth}.

\subsection{Certifying $\hat{f}$}
\label{subsec:certify}
Given an input $x$, compute $\hat{f}(x)$ as described above. Now, we need to compute a radius $\hat{R}$ that satisfies condition~\ref{eq:R_hat}.
As per bound~\ref{eq:cohen_bnd}, in order to maintain a probability mass of at least $1/2 + \Delta$ for any $\epsilon_1$-size perturbation of $x$, the ball $\mathcal{B}(\hat{f}(x), \hat{R})$ must enclose at least
\begin{align}
\label{bnd:prob}
p = \Phi \left( \Phi^{-1} \left( \frac{1}{2} + \Delta \right) + \frac{\epsilon_1}{\sigma} \right)
\end{align}
probability mass of $f(x+\mathcal{P})$. Again, just as in the case of estimating $\bar{f}$, we may only compute $\hat{R}$ from a finite  number of samples $m$ of the distribution $f(x + \mathcal{P})$.
For each sample $z_i \sim x + \mathcal{P}$, we compute the distance $d(\hat{f}(x), f(z_i))$ and set $\hat{R}$ to be the $q$th-quantile $\Tilde{R}_q$ of these distances for a $q$ that is slightly greater than $p$ (see equation~\ref{bnd:quant} below). The $q$th-quantile $\tilde{R}_q$ is a value larger than at least $q$ fraction of the samples. We set $q$ as,
\begin{align}
\label{bnd:quant}
q = p + \sqrt{\frac{ \ln \left( 1 / \alpha_2 \right)}{2m}},
\end{align}
for some small $\alpha_2 \in [0, 1]$.
This guarantees that, with high probability, the ball $\mathcal{B}(\hat{f}(x), \Tilde{R}_q)$ encloses at least $p$ fraction of the probability mass of $f(x + \mathcal{P})$.
We prove the following lemma by bounding the cumulative distribution function of the distances of $f(z_i)$s from $\hat{f}(x)$ using the Dvoretzky–Kiefer–Wolfowitz inequality.
\begin{lemma}
\label{lem:compute-bigR}
With probability $1-\alpha_2$,
\[\mathbb{P}\left[ f(X) \in \mathcal{B}(\hat{f}(x), \Tilde{R}_q) \right] > p.\]
\end{lemma}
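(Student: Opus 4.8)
The plan is to reduce this metric-space statement to a one-dimensional empirical-CDF estimation problem and then invoke the one-sided Dvoretzky–Kiefer–Wolfowitz (DKW/Massart) inequality. First I would fix $\hat{f}(x)$: recall that in algorithm~\ref{alg:certify} the center $\hat{f}(x)$ is produced first, and only afterwards a fresh batch $Z=\{z_i\}_{i=1}^m$ with $z_i\sim x+\mathcal{P}$ is drawn, so we may condition on the value of $\hat{f}(x)$ and treat it as deterministic. Define the scalar random variable $T=d(\hat{f}(x),f(X))$ for $X\sim x+\mathcal{P}$. The key observation is the equivalence $f(X)\in\mathcal{B}(\hat{f}(x),r)\iff T\le r$, so that $\mathbb{P}[f(X)\in\mathcal{B}(\hat{f}(x),\Tilde{R}_q)]=F(\Tilde{R}_q)$, where $F(t)=\mathbb{P}[T\le t]$ is a genuine one-dimensional CDF. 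Likewise the quantities $d(\hat{f}(x),f(z_i))$ computed in algorithm~\ref{alg:certify} are (conditionally on $\hat{f}(x)$) i.i.d.\ copies $T_1,\dots,T_m$ of $T$, with empirical CDF $F_m(t)=\tfrac1m\#\{i:T_i\le t\}$, and $\Tilde{R}_q$ is by construction the empirical $q$-quantile of this sample, so $F_m(\Tilde{R}_q)\ge q$.

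Second, I would apply the one-sided DKW inequality to $F_m$ versus $F$: for every $\varepsilon\ge 0$, $\mathbb{P}\big[\sup_{t\in\mathbb{R}}(F_m(t)-F(t))>\varepsilon\big]\le e^{-2m\varepsilon^2}$, which holds for arbitrary distributions of $T$ with no continuity assumption. Choosing $\varepsilon=\sqrt{\ln(1/\alpha_2)/2m}$ makes the right-hand side equal to $\alpha_2$, so with probability at least $1-\alpha_2$ we have $F(t)\ge F_m(t)-\varepsilon$ simultaneously for all thresholds $t$. Using the uniform bound here, rather than a pointwise Hoeffding estimate on a single $t$, is exactly what lets us instantiate it at $t=\Tilde{R}_q$, since that threshold is itself a function of the sample.

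Third, I would plug $t=\Tilde{R}_q$ into this bound and combine $F_m(\Tilde{R}_q)\ge q$ with the choice of $q$ in~\ref{bnd:quant}:
\[
F(\Tilde{R}_q)\;\ge\;F_m(\Tilde{R}_q)-\varepsilon\;\ge\;q-\varepsilon\;=\;\Big(p+\sqrt{\tfrac{\ln(1/\alpha_2)}{2m}}\Big)-\sqrt{\tfrac{\ln(1/\alpha_2)}{2m}}\;=\;p,
\]
so $\mathbb{P}[f(X)\in\mathcal{B}(\hat{f}(x),\Tilde{R}_q)]=F(\Tilde{R}_q)\ge p$. To obtain the strict inequality stated in the lemma, I would note that $\Tilde{R}_q$ is realized as one of the order statistics, $\Tilde{R}_q=T_{(\lceil qm\rceil)}$, whence $F_m(\Tilde{R}_q)\ge \lceil qm\rceil/m>q$ whenever $qm\notin\mathbb{Z}$, and $qm$ is non-integral generically because of the $\sqrt{\ln(1/\alpha_2)/2m}$ term; alternatively one absorbs an arbitrarily small slack into $\varepsilon$. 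The only genuinely delicate point — and the step I would be most careful to state cleanly — is the conditioning/independence bookkeeping that makes the $d(\hat{f}(x),f(z_i))$ bona fide i.i.d.\ samples of a fixed one-dimensional distribution, so that DKW is legitimately applicable; the rest is a direct substitution.
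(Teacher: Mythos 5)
Your proof is correct and follows essentially the same route as the paper's: reduce to the one-dimensional CDF of the distance $d(\hat{f}(x), f(X))$, apply the one-sided DKW inequality with $\varepsilon=\sqrt{\ln(1/\alpha_2)/2m}$, and evaluate the uniform bound at the empirical $q$-quantile. Your extra care about conditioning on $\hat{f}(x)$ and about the strict versus non-strict inequality are minor refinements of the identical argument, not a different approach.
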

Combining with theorem~\ref{thm:main-approx-prob}, we have the final certificate:
\[\forall x' \text{ s.t. } \|x - x'\|_2 \leq \epsilon_1, \; d(\hat{f}(x), \hat{f}(x')) \leq (1 + \beta)\hat{R},\]
with probability at least $1-\alpha$, for $\alpha = \alpha_1 + \alpha_2$. In our experiments, we set $\alpha_1 = \alpha_2 = 0.005$ to achieve an overall success probability of $1 - \alpha = 0.99$, and calculate the required $\Delta_1, \Delta_2$ and $q$ values accordingly.
We set $\Delta$ to be as small as possible without violating $\max (\Delta_1, \Delta_2) \leq \Delta$ too often.
We use a $\beta=2$-approximation for computing the minimum enclosing ball in the smoothing step.
Algorithm~\ref{alg:certify} provides the pseudocode for the certification procedure.

\section{Relaxing Metric Requirements}
\label{sec:rlx_met_req}
Although we defined our procedure for metric outputs, our analysis does not critically use all the properties of a metric.
For instance, we do not require $d(z_1, z_2)$ to be strictly greater than zero for $z_1 \neq z_2$.
An example of such a distance measure is the total variation distance that returns zero for two vectors that differ by a constant amount on each coordinate.
Our proofs do implicitly use the symmetry property, but asymmetric distances can be converted to symmetric ones by taking the sum or the max of the distances in either directions.
Perhaps the most important property of metrics that we use is the triangle inequality as it is critical for the robustness guarantee of the smoothed function.
However, even this constraint may be partially relaxed.
It is sufficient for the distance function $d$ to satisfy the triangle inequality approximately, i.e., $d(a,c) \leq \gamma (d(a, b) + d(b, c))$, for some constant $\gamma$.
The theorems and lemmas can be adjusted to account for this approximation, e.g., the bound in theorem~\ref{thm:main-exact} will become $2 \gamma R$.
A commonly used distance measure for comparing images and documents is the cosine distance defined as the inner-product of two vectors after normalization.
This distance can be show to be proportional to the squared Euclidean distance between the normalized vectors which satisfies the relaxed version of triangle inequality for $\gamma = 2$.

These relaxations extend the scope of center smoothing to many commonly used distance measures that need not necessarily satisfy all the metric properties.
For instance, perceptual distance metrics measure the distance between two images in some feature space rather than image space.
Such distances align well with human judgements when the features are extracted from a deep neural network~\cite{ZhangIESW18} and are considered more natural measures for image similarity.
For two images $I_1$ and $I_2$, let $\phi(I_1)$ and $\phi(I_2)$ be their feature representations. Then, for a distance function $d$ in the feature space that satisfies the relaxed triangle inequality, we can define a distance function $d_{\phi}(I_1, I_2) = d(\phi(I_1), \phi(I_2))$ in the image space, which also satisfies the relaxed triangle inequality. For any image $I_3$,
\begin{align*}
    d_{\phi}(I_1, I_2) &= d(\phi(I_1), \phi(I_2)) \\
                &\leq \gamma \left( d(\phi(I_1), \phi(I_3)) + d(\phi(I_3), \phi(I_2)) \right)\\
                &= \gamma \left( d_{\phi}(I_1, I_3) + d_{\phi}(I_3, I_2) \right).
\end{align*}

\section{Experiments}
We apply center smoothing to certify a wide range of output metrics: Jaccard distance based on intersection over union (IoU) of sets, total variation distances for images, and perceptual distance.
We certify the bounding box generated by a face detector -- a key component of most facial recognition systems -- by guaranteeing the minimum overlap (measured using IoU) it must have with the output under an adversarial perturbation of the input.
For instance, if $\epsilon_1=0.2$, the Jaccard distance (1-IoU) is guaranteed to be bounded by 0.2, which implies that the bounding box of a perturbed image must have at least 80\% overlap with that of the clean image.
We use a pre-trained face detection model for this experiment.
We certify the perceptual distance of the output of a generative model (trained on ImageNet) that produces $128 \times 128$ RGB images using a high-dimensional version of the smoothing procedure Smooth-HD described in the appendix.
For total variation distance, we use simple, easy-to-train convolutional neural network based dimensionality reduction (autoencoder) and image reconstruction models.
Our goal is to demonstrate the effectiveness of our method for a wide range of applications and so, we place less emphasis on the performance of the underlying models being smoothed.
In each case, we show that our method is capable of generating certified guarantees without significantly degrading the performance of the underlying model.
We provide additional experiments for other metrics and parameter settings in the appendix.

As is common in the randomized smoothing literature, we train our base models (except for the pre-trained ones) on noisy data with different noise levels $\sigma_{train} = 0.1, 0.2, \ldots, 0.5$ to make them more robust to input perturbations.
We keep the smoothing noise $\sigma$ of the robust model same as the training noise $\sigma_{train}$ of the base model.
We use $n = 10^4$ samples to estimate the smoothed function and $m = 10^6$ samples to generate certificates, unless stated otherwise.
We set $\Delta = 0.05, \alpha_1 = 0.005$ and $\alpha_2 = 0.005$ as discussed in previous sections.
We grow the smoothing noise $\sigma$ linearly with the input perturbation $\epsilon_1$.
Specifically, we maintain $\epsilon_1 = h \sigma$ for different values of $h = 2, 1$ and 1.5 in our experiments.
We plot the median certified output radius $\epsilon_2$ and the median smoothing error, defined as the distance between the outputs of the base model and the smoothed model $d(f(x), \hat{f}(x))$, of fifty random test examples for different values of $\epsilon_1$.
In all our experiments, we observe that
both these quantities increase as the input radius $\epsilon_1$ increases, but the smoothing error remains significantly below the certified output radius.
Also, increasing the value of $h$ improves the quality of the certificates (lower $\epsilon_2$).
This could be due to the fact that for a higher $h$, the smoothing noise $\sigma$ is lower (keeping $\epsilon_1$ constant), which means that the radius of the minimum enclosing ball in the output space is smaller leading to a tighter certificate.
However, setting $h$ too high can cause the value of $q$ in equation~\ref{bnd:quant} to exceed one ($q$ depends on $p$, which in turn depends on $h$ in eq.~\ref{bnd:prob}), leading the certification procedure (algorithm~\ref{alg:certify}) to fail.
We ran all our experiments on a single NVIDIA GeForce RTX 2080 Ti GPU in an internal cluster.
Each of the fifty examples we certify took somewhere between 1-3 minutes depending on the underlying model.

\subsection{Jaccard distance}

\begin{figure}[t]
\centering
\hspace{-4mm}
\begin{subfigure}{.69\textwidth}
    \includegraphics[width=\textwidth, trim={27mm 17mm 10mm 11mm}, clip]{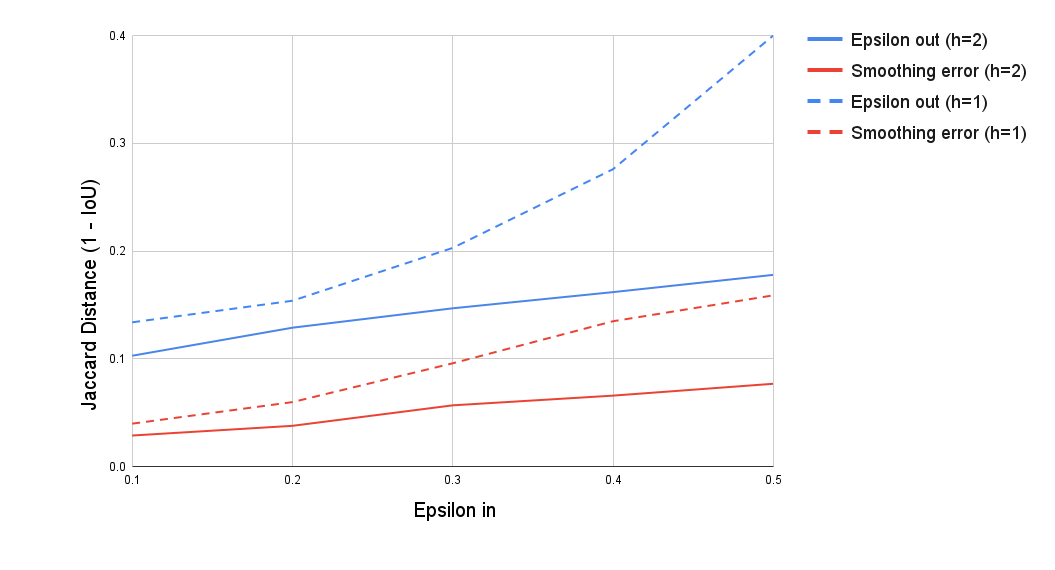}
  \caption{Certifying Jaccard Distance (1 - IoU).}
  \label{fig:face_det_plot}
\end{subfigure}%
\hfill
\begin{subfigure}{.3\textwidth}
  \centering
  \includegraphics[width=\textwidth, trim={0 20mm 0 0}, clip]{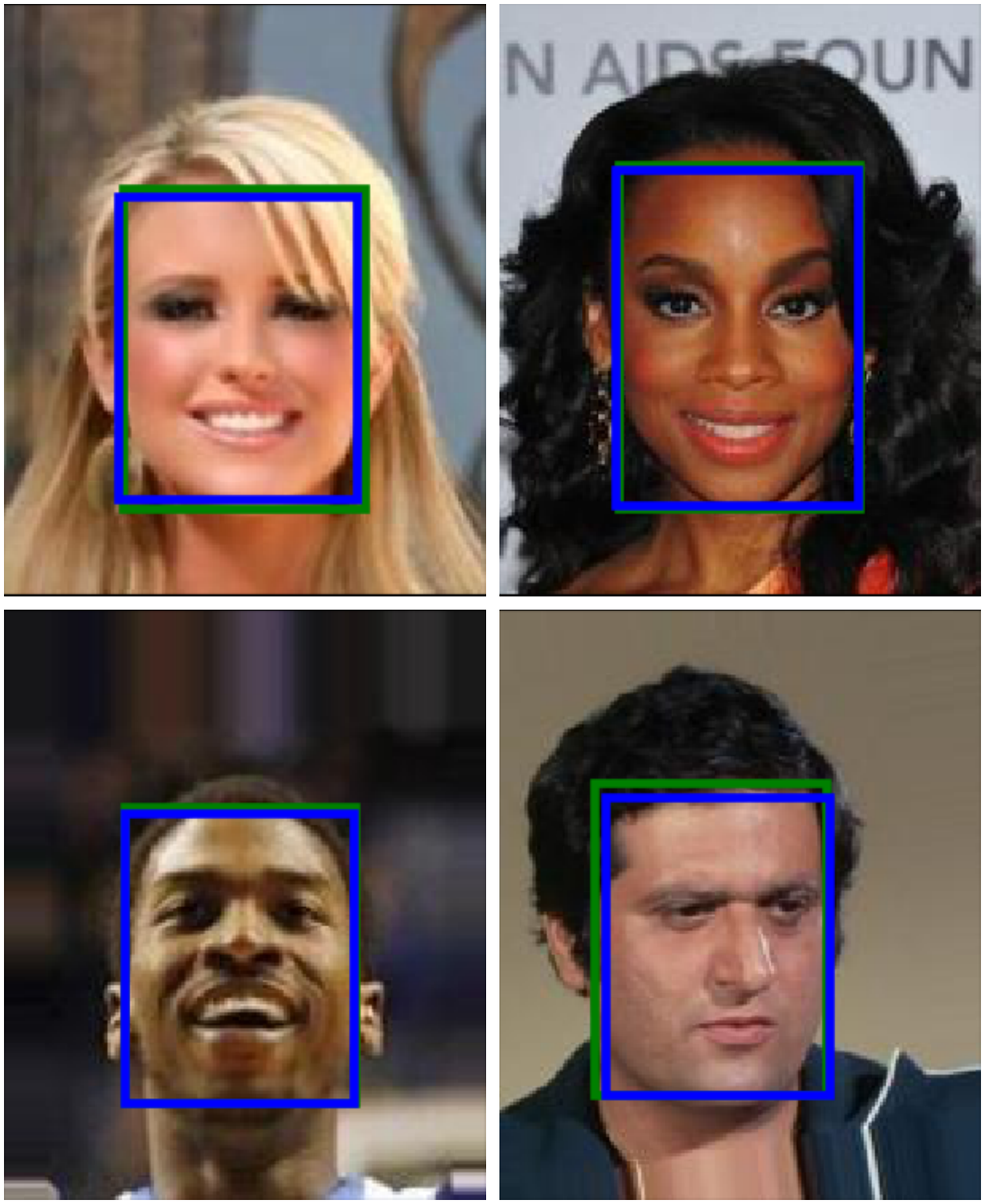}
  \caption{Smoothed Output.}
  \label{fig:face_det_output}
\end{subfigure}
\caption{Face Detection on CelebA using MTCNN detector: Part (a) plots the certified output radius $\epsilon_2$ and the smoothing error for $h=1$ and 2. Part (b) compares the smoothed output (blue box) to the output of the base model (green box, mostly hidden behind the blue box) showing a significant overlap.}
\vspace{-4mm}
\end{figure}

It is known that facial recognition systems can be deceived to evade detection, impersonate authorized individuals and even render completely ineffective \cite{vakhshiteh2020threat, song2018FR, frearson2020adversarial}.
Most facial recognition systems first detect a region that contains a persons face, e.g. a bounding box, and then uses facial features to identify the individual in the image.
To evade detection, an attacker may seek to degrade the quality of the bounding boxes produced by the detector and can even cause it to detect no box at all.
Bounding boxes are often interpreted as sets and the their quality is measured as the amount of overlap with the desired output.
When no box is output, we say the overlap is zero.
The overlap between two sets is defined as the ratio of the size of the intersection between them to the size of their union (IoU).
Thus, to certify the robustness of the output of a face detector, it makes sense to bound the worst-case IoU of the output of an adversarial input to that of a clean input.
The corresponding distance function, known as Jaccard distance, is defined as $1-IoU$ which defines a metric over the universe of sets.
\[IoU(A, B) = \frac{|A \cap B|}{|A \cup B|}, \quad d_J(A, B) = 1 - IoU(A, B) = 1 - \frac{|A \cap B|}{|A \cup B|}.\]

In this experiment, we certify the output of a pre-trained face detection model MTCNN~\cite{ZhangZL016} on the CelebA face dataset~\cite{liu2015faceattributes}.
We set $n=5000$ and $m=10000$, and use default values for other parameters discussed above.
Figure~\ref{fig:face_det_plot} plots the certified output radius $\epsilon_2$ and the smoothing error for $h = \epsilon_1/\sigma = 1$ and 2 for $\epsilon_1 = 0.1, 0.2, \ldots, 0.5$.
Certifying the Jaccard distance allows us to certify IoU as well, e.g., for $h=2$, $\epsilon_2$ is consistently below 0.2 which means that even the worst bounding box under adversarial perturbation of the input has an overlap of at least 80\% with the box for the clean input.
The low smoothing error shows that the performance of the base model does not drop significantly as the actual output of the smoothed model has a large overlap with that of the base model.
Figure~\ref{fig:face_det_output} compares the outputs of the smoothed model (blue box) and the base model (green box).
For most of the images, the blue box overlaps with the green one almost perfectly.

\subsection{Perceptual Distance}

Deep generative models like GANs and VAEs have been shown to be vulnerable to adversarial attacks~\cite{KosFS18}.
One attack model is to produce an adversarial example that is close to the original input in the latent space, measured using $\ell_2$-norm.
The goal is to make the model generate a different looking image using a latent representation that is close to that of the original image.
We apply center smoothing to a generative adversarial network BigGAN pre-trained on ImageNet images~\cite{BrockDS19}.
We use the version of the GAN that generates $128 \times 128$ resolution ImageNet images from a set of 128 latent variables.
Since we are interested in producing similar looking images for similar latent representations, a good output metric would be the perceptual distance between two images measured by LPIPS metric~\cite{ZhangIESW18}.
This distance function takes in two images, passes them through a deep neural network, such as VGG, and computes a weighted sum of the square of the differences of the activations (after some normalization) produced by the two images.
The process can be thought of as generating two feature vectors $\phi_1$ and $\phi_2$ for the two input images $I_1$ and $I_2$ respectively, then computing a weighted sum of the element-wise square of the differences between the two feature vectors, i.e.,
\[d(I_1, I_2) = \sum_i w_i (\phi_{1i} - \phi_{2i})^2\]
The square of differences metric can be shown to follow the relaxed triangle inequality for $\gamma = 2$.
Therefore, the the final bound on the certified output radius will be $\gamma(1 + 2\gamma)\hat{R} = 10\hat{R}$.
Figure~\ref{fig:gan_lpips_plot} plots the median smoothing error and certified output radius $\epsilon_2$ for fifty randomly picked latent vectors for $\epsilon_1 = 0.01, 0.02, \ldots, 0.05$ and $h = 1, 1.5$.
For these experiments, we set $n = 2000, m=10^4$ and $\Delta = 0.8$.
We use the modified smoothing procedure Smooth-HD (see appendix) for high-dimensional outputs with a small batch size of 150 to accommodate the samples in memory.
It takes about three minutes to smooth and certify each input on a single NVIDIA GeForce RTX 2080 Ti GPU in an internal cluster.
Due to the higher factor of ten in the certified output radius in this case compared to our other experiments where the factor is three, the certified output radius increases faster with the input radius $\epsilon_1$, but the smoothing error remains low showing that, in practice, the method does not significantly degrade the performance of the base model.
Figure~\ref{fig:gan_output} shows that, visually, the smoothed output is not very different from the output of the base model.
The input radii we certify for are lower in this case than our other experiments due to the low dimensionality (only 128 dimensions) of the input (latent) space as compared to the input (image) spaces in our other experiments.

\begin{figure}[t]
\centering
\hspace{-4mm}
\begin{subfigure}{.69\textwidth}
    \includegraphics[width=\textwidth, trim={27mm 17mm 10mm 11mm}, clip]{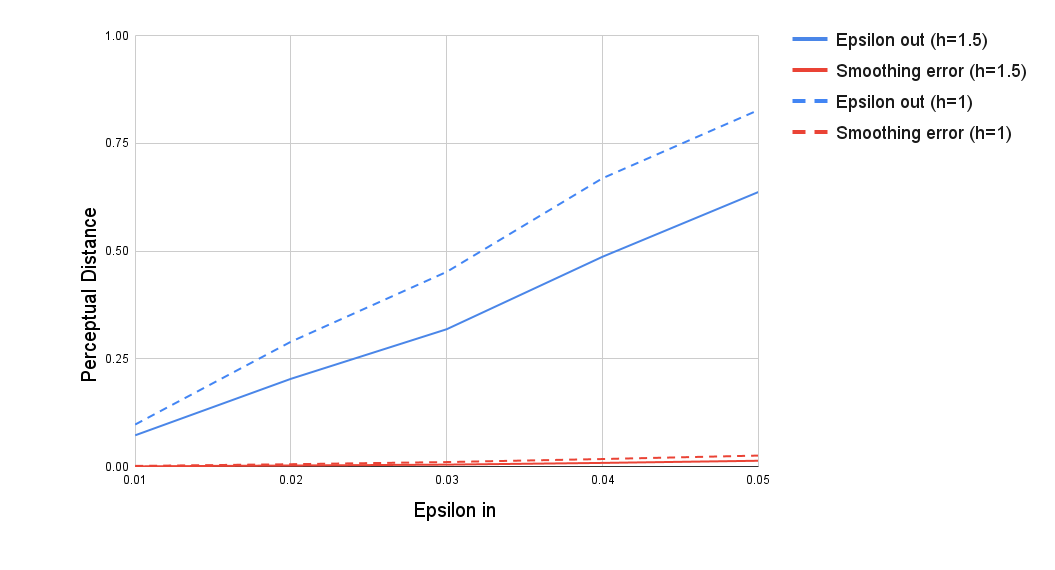}
  \caption{Certifying perceptual distance.}
  \label{fig:gan_lpips_plot}
\end{subfigure}%
\hfill
\begin{subfigure}{.3\textwidth}
  \centering
  \vspace{3mm}
  \includegraphics[width=\textwidth, trim={1cm 16cm 10cm 2cm}, clip]{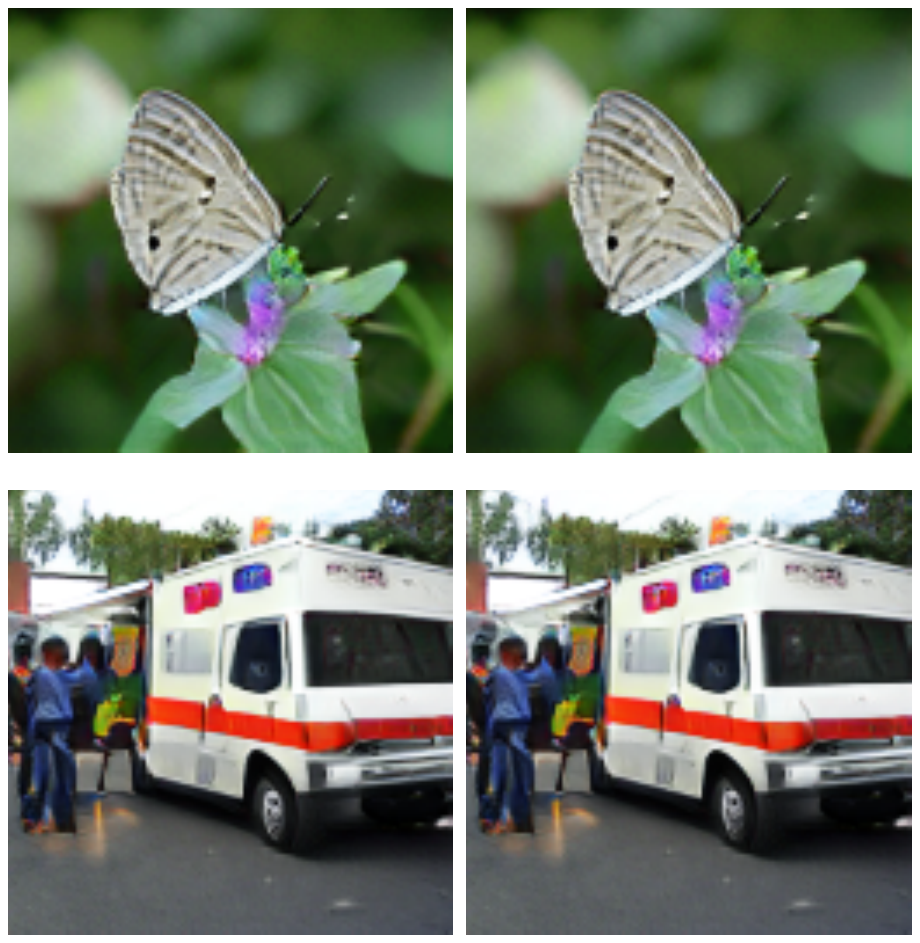}
  \vspace{1mm}
  \caption{Model Output vs Smoothed Output.}
  \label{fig:gan_output}
\end{subfigure}
\caption{Generative model for ImageNet: Part (a) plots the certified output radius $\epsilon_2$ and the smoothing error for $h=1$ and 1.5. Part (b) compares the output of the base model to that of the smoothed model.}
\vspace{-4mm}
\end{figure}

\subsection{Total Variation Distance}
\label{sec:exp-tot-var}
The total variation norm of a vector $x$ is defined as the sum of the magnitude of the difference between pairs of coordinates defined by a {\it neighborhood} set $N$.
For a 1-dimensional array $x$ with $k$ elements, one can define the neighborhood as the set of consecutive elements.
\[TV(x) = \sum_{(i,j) \in N} |x_i - x_j|, \quad TV_{1D}(x) = \sum_{i=1}^{k-1} |x_i - x_{i+1}|.\]
Similarly, for a grayscale image represented by a $h \times w$ 2-dimensional array $x$, the neighborhood can be defined as the next element (pixel) in the row/column.
In case of an RGB image, the difference between the neighboring pixels is  a vector, whose magnitude can be computed using an $\ell_p$-norm. For, our experiments we use the $\ell_1$-norm.
\[TV_{RGB}(x) = \sum_{i=1}^{h-1} \sum_{j=1}^{w-1} \|x_{i, j} - x_{i+1, j}\|_1 + \|x_{i, j} - x_{i, j+1}
\|_1 \]
The total variation distance between two images $I_1$ and $I_2$ can be defined as the total variation norm of the difference $I_1 - I_2$, i.e., $TVD(I_1, I_2) = TV(I_1 - I_2)$.
The above distance defines a pseudometric over the space of images as it satisfies the symmetry property and the triangle inequality, but may violate the identity of indiscernibles as an image obtained by adding the same value to all the pixel intensities has a distance of zero from the original image.
However, as noted in section~\ref{sec:rlx_met_req}, our certificates hold even for this setting.

\begin{figure}[t]
\centering
\begin{subfigure}{.49\textwidth}
  \includegraphics[width=\textwidth, trim={27mm 17mm 10mm 11mm}, clip]{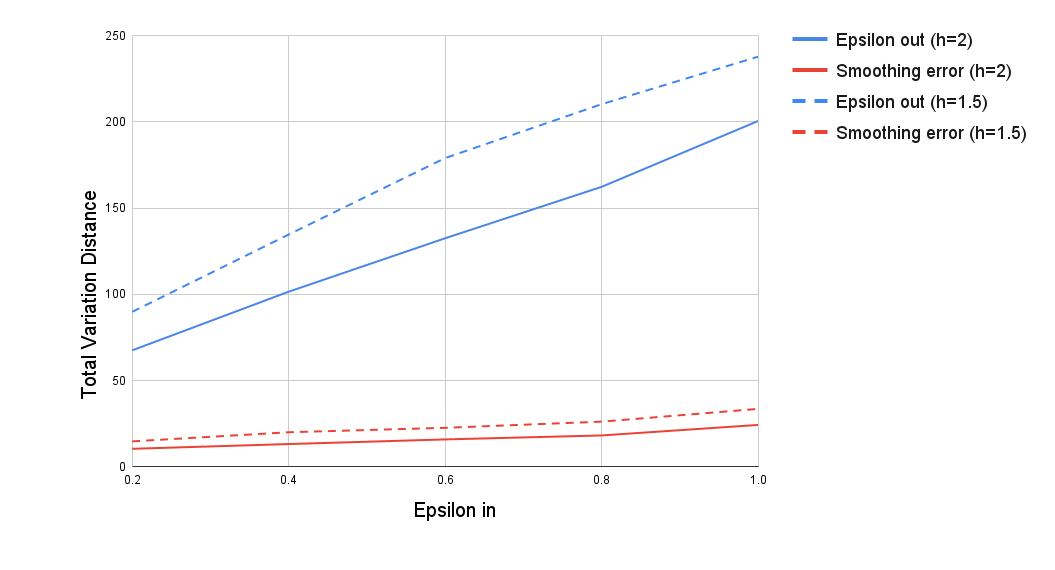}
  \caption{Dimensionality Reduction on MNIST}
\end{subfigure}%
\hfill
\begin{subfigure}{.49\textwidth}
  \centering
  \includegraphics[width=\textwidth, trim={27mm 17mm 10mm 11mm}, clip]{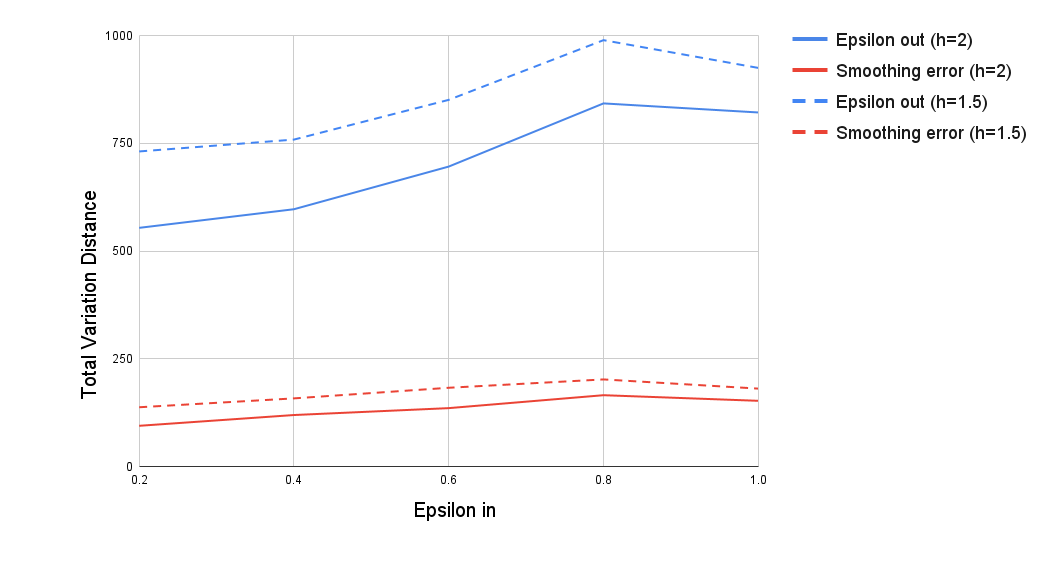}
  \caption{Dimensionality Reduction on CIFAR-10}
\end{subfigure}

\begin{subfigure}{.49\textwidth}
  \vspace{4mm}
  \includegraphics[width=\textwidth, trim={27mm 17mm 10mm 11mm}, clip]{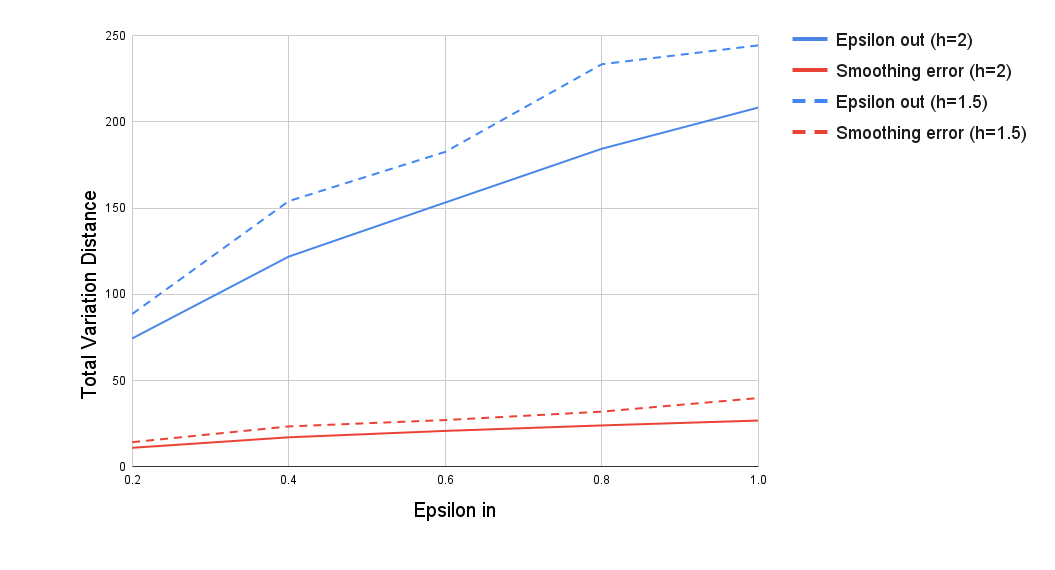}
  \caption{Image Reconstruction on MNIST}
\end{subfigure}%
\hfill
\begin{subfigure}{.49\textwidth}
  \vspace{4mm}
  \includegraphics[width=\textwidth, trim={27mm 17mm 10mm 11mm}, clip]{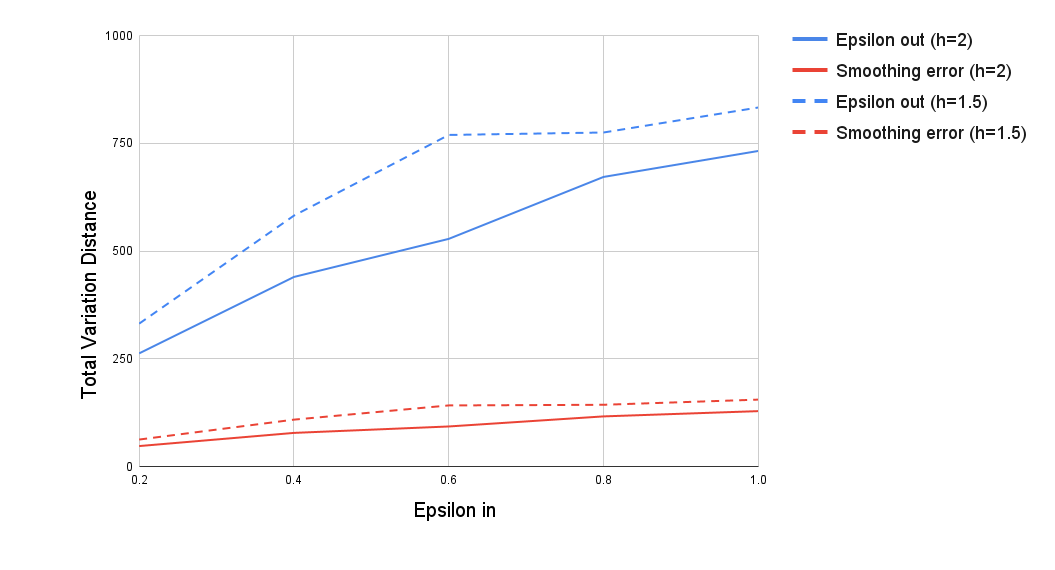}
  \caption{Image Reconstruction on CIFAR-10}
\end{subfigure}
\caption{Certifying Total Variation Distance}
\label{fig:tvd_plots}
\vspace{-4mm}
\end{figure}

We certify total variation distance for the problems of dimensionality reduction and image reconstruction on MNIST~\cite{MNIST2012} and CIFAR-10~\cite{cifar10}.
The base-model for dimensionality reduction is an autoencoder that uses convolutional layers in its encoder module to map an image down to a small number of latent variables. The decoder applies a set of de-convolutional operations to reconstruct the same image.
We insert batch-norm layers in between these operations to improve performance.
For image reconstruction, the goal is to recover an image from small number of measurements of the original image.
We apply a transformation defined by Gaussian matrix $A$ on each image to obtain the measurements.
The base model tries to reconstruct the original image from the measurements.
The attacker, in this case, is assumed to add a perturbation in the measurement space instead of the image space (as in dimensionality reduction).
The model first reverts the measurement vector to a vector in the image space by simply applying the pseudo-inverse of $A$ and then passes it through a similar autoencoder model as for dimensionality reduction.
We present results for $\epsilon_1 = 0.2, 0.4, \ldots, 1.0$ and $h = 2, 1.5$ and use 256 latent dimensions and measurements for these experiments in figure~\ref{fig:tvd_plots}.
To put these plots in perspective, the maximum TVD between two CIFAR-10 images could be $6 \times 31 \times 31 = 5766$ and between MNIST images could be $2 \times 27 \times 27 = 1458$ (pixel values between 0 and 1).

\section{Conclusion}
\label{sec:conclusion}
Provable adversarial robustness can be extended beyond classification tasks to problems with structured outputs.
We design a smoothing-based procedure that can make a model of this kind provably robust against norm bounded adversarial perturbations of the input.
In our experiments, we demonstrate that this method can generate meaningful certificates under a wide variety of distance metrics in the output space without significantly compromising the quality of the base model.
We also note that the metric requirements on the distance measure can be partially relaxed in exchange for weaker certificates.

We focus on $\ell_2$-norm bounded adversaries and the Gaussian smoothing distribution.
An important direction for future investigation could be whether this method can be generalised beyond $\ell_p$-adversaries to more natural threat models, e.g., adversaries bounded by total variation distance, perceptual distance, cosine distance, etc.
Center smoothing does not critically rely on the shape of the smoothing distribution or the threat model.
Thus, improvements in these directions could potentially be coupled with our method to further broaden the scope of provable robustness in machine learning.

\section{Acknowledgements}
This work was supported by the AFOSR MURI program, DARPA GARD, the Office of Naval Research, and the National Science Foundation Division of Mathematical Sciences.

\newpage
\bibliography{references}

\newpage

\appendix

\section{Proof of Theorem~\ref{thm:main-prob}}
Let $z' = \hat{f}(x')$. Then, by definition of $\hat{f}$,
\begin{align}
\label{eq:thm-main-prob-eq1}
\mathbb{P}\left[ f(X') \in \mathcal{B}(z', \hat{r}(x', \Delta))\right] \geq \frac{1}{2} - \Delta,
\end{align}
where $X' \sim x' + \mathcal{P}$ and
\[\hat{r}(x', \Delta) = \underset{z''}{\min} \; r \; \text{s.t.} \; \mathbb{P} [f(X') \in \mathcal{B}(z'', r)] \geq \frac{1}{2} + \Delta. \]
And, by definition of $\hat{R}$,
\begin{align}
\label{eq:thm-main-prob-eq2}
\mathbb{P} [f(X') \in \mathcal{B}(\hat{f}(x), \hat{R})] > \frac{1}{2} + \Delta.
\end{align}
Therefore, from (\ref{eq:thm-main-prob-eq1}) and (\ref{eq:thm-main-prob-eq2}), $\mathcal{B}(z', \hat{r}(x', \Delta))$ and $\mathcal{B}(\hat{f}(x), \hat{R})$ must have a non-empty intersection. Let, $y$ be a point in that intersection. Then,
\begin{align*}
    d(\hat{f}(x), \hat{f}(x')) &\leq d(\hat{f}(x), y) + d(y, z')\\
    &\leq \hat{r}(x', \Delta) + \hat{R}.
\end{align*}
Since, by definition, $\hat{r}(x', \Delta)$ is the radius of the smallest ball with $1/2 + \Delta$ probability mass of $f(x' + \mathcal{P})$ over all possible centers in $\mathbb{R}^k$ and $\hat{R}$ is the radius of the smallest such ball centered at $\hat{f}(x)$, we must have $\hat{r}(x', \Delta) \leq \hat{R}$. Therefore,
\[d(\hat{f}(x), \hat{f}(x')) \leq 2 \hat{R}.\]

\section{Proof of Lemma~\ref{lem:radius-bnd}}
Consider the smallest ball $\mathcal{B}(z', \hat{r}(x, \Delta_1))$ that encloses at least $1/2 + \Delta_1$ probability mass of $f(x + \mathcal{P})$. By Hoeffding's inequality, with at least $1 - e^{-2n\Delta_1^2}$ probability, at least half the points in $Z$ must be in this ball. Since, $r$ is the radius of the minimum enclosing ball that contains at least half of the points in $Z$, we have $r \leq \hat{r}(x, \Delta_1)$.

\section{Proof of Theorem~\ref{thm:main-approx-prob}}
$\beta$-MEB$(Z, 1/2)$ computes a $\beta$-approximation of the minimum enclosing ball that contains at least half of the points of $Z$. Therefore, by lemma~\ref{lem:radius-bnd}, with probability at least $1 - e^{-2n\Delta_1^2}$,
\[\beta\text{-MEB}(Z, 1/2) \leq \beta \hat{r}(x, \Delta_1) \leq \beta \hat{r}(x, \Delta),\]
since $\Delta \geq \Delta_1$. Thus, the procedure to compute $\hat{f}$, if succeeds, will output a point $z \in \mathbb{R}^k$ which, with probability at least $1 - 2e^{-2n\Delta_1^2}$, will satisfy,
\[\mathbb{P}\left[ f(X) \in \mathcal{B}(z, \beta \hat{r}(x, \Delta))\right] \geq \frac{1}{2} - \Delta.\]
Now, using the definition of $\hat{R}$ and following the same reasoning as theorem~\ref{thm:main-prob}, we can say that,
\begin{align*}
    d(\hat{f}(x), \hat{f}(x')) &\leq \beta \hat{r}(x', \Delta) + \hat{R}\\
    &\leq (1 + \beta) \hat{R}.
\end{align*}

\section{Proof of Lemma~\ref{lem:compute-bigR}}
Given $z = \hat{f}(x)$, define a random variable $Q = d(z, f(X))$, where is $X \sim x + \mathcal{P}$. For $m$ i.i.d. samples of $X$, the values of $Q$ are independently and identically distributed. Let $F(r)$ denote the true cumulative distribution function of $Q$ and define the empirical cdf $F_m(r)$ to be the fraction of the $m$ samples of $Q$ that are less than or equal to $r$, i.e.,
\[F_m(r) = \frac{1}{m} \sum_{i=1}^m \mathbf{1}_{ \{Q_i \leq r \} }\]
Using the Dvoretzky–Kiefer–Wolfowitz inequality, we have,
\[\mathbb{P} \left[ \sup_{r \in \mathbb{R}} \left( F_m(r) - F(r) \right) > \epsilon \right] \leq e^{-2m\epsilon^2} \]
for $\epsilon \geq \sqrt{\frac{1}{2m} \ln 2}$. Setting, $e^{-2m\epsilon^2} = \alpha_2$ for some $\alpha_2 \leq 1/2$, we have,
\[\sup_{r \in \mathbb{R}} \left( F_m(r) - F(r) \right) < \sqrt{ \frac{\ln \left( 1/\alpha_2 \right)}{2m}} \]
with probability at least $1 - \alpha_2$. Set $r = \tilde{R}_q$, the $q$th quantile of of the $m$ samples. Then,
\begin{align*}
    F(\tilde{R}_q) &> F_m(\tilde{R}_q) - \sqrt{ \frac{\ln \left( 1/\alpha_2 \right)}{2m}}\\
    \text{or,} \quad \mathbb{P} \left[ Q \leq \tilde{R}_q \right] &> q - \sqrt{ \frac{\ln \left( 1/\alpha_2 \right)}{2m}} = p.
\end{align*}
With probability $1-\alpha_2$,
\[\mathbb{P}\left[ f(X) \in \mathcal{B}(\hat{f}(x), \Tilde{R}_q) \right] > p.\]

\section{High-dimensional Outputs}
\label{sec:high-dim}
For functions with high-dimensional outputs, like high-resolution images, it might be difficult to compute the minimum enclosing ball (MEB) for a large number of points.
The smoothing procedure needs us to store all the $n \sim 10^3 - 10^4$ sampled points until the MEB computation is complete, requiring $O(nk')$ space, where $k'$ is the dimensionality of the output space.
It does not allow us to sample the $n$ points in batches as is possible for the certification step.
Also, computing the MEB by considering the pair-wise distances between all the sampled points is time-consuming and requires $O(n^2)$ pair-wise distance computations.
To bring down the space and time requirements, we design another version (Smooth-HD, algorithm~\ref{alg:smooth-HD}) of the smoothing procedure where we compute the MEB by first sampling a small number $n_0 \sim 30$ of candidate centers and then returning one of these candidate centers that has the smallest median distance to a separate sample of $n \; (\gg n_0)$ points.
We sample the $n$ points in batches and compute the distance $d(c_i, z_j)$ for each pair of candidate center $c_i$ and point $z_j$ in a batch. 
The rest of the procedure remains the same as algorithm~\ref{alg:smooth}.
It only requires us to store batch-size number of output points and the $n_0$ candidate centers at any given time, significantly reducing the space complexity.
Also, this procedure only requires $O(n_0 n)$ pair-wise distance computations.
The key idea here is that, with very high probability ($> 1 - 10^{-9}$), at least one of the $n_0$ candidate centers will lie in the smallest ball that encloses at least $1/2+\Delta_1$ probability mass of $f(x + \mathcal{P})$.
Also, with high probability, at least half of the $n$ samples will lie in this ball too.
Thus, the median distance of this candidate center to the $n$ samples is at most $2 \gamma \hat{r}(x, \Delta_1)$, after accounting for the factor of $\gamma$ in the relaxed version of the triangle inequality as discussed in section~\ref{sec:rlx_met_req}.
Ignoring the probability that none of the $n_0$ points lie inside the ball, we can derive the following version of theorem~\ref{thm:main-approx-prob}:
\begin{theorem}
With probability at least $1-\alpha_1$,
\[\forall x' \text{ s.t. } \|x - x'\|_2 \leq \epsilon_1, \; d(\hat{f}(x), \hat{f}(x')) \leq \gamma(1 + 2\gamma)\hat{R}\]
where $\alpha_1 = 2e^{-2n\Delta_1^2}$.
\end{theorem}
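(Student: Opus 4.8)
The plan is to follow the proof of Theorem~\ref{thm:main-approx-prob} almost verbatim, with two substitutions: replace the $\beta$-MEB guarantee by a bound on the \emph{median distance} of the candidate center selected by Smooth-HD, and insert a factor of $\gamma$ at every use of the triangle inequality, since $d$ now satisfies only the relaxed inequality $d(a,c)\le\gamma\bigl(d(a,b)+d(b,c)\bigr)$. The relaxed inequality gets used twice — once to bound the median distance inside the optimal ball, producing a factor $2\gamma$, and once in the final combination step, producing the leading factor $\gamma$ — and together these upgrade the constant from $1+\beta$ to $\gamma(1+2\gamma)$.

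First I would fix, for the clean input $x$, the ball $\mathcal{B}^*=\mathcal{B}(z^*,\hat r(x,\Delta_1))$ of minimum radius enclosing at least $1/2+\Delta_1$ mass of $f(x+\mathcal{P})$. Each of the $n_0$ candidate centers is an i.i.d.\ draw of $f(x+\mathcal{P})$, hence lands in $\mathcal{B}^*$ with probability at least $1/2+\Delta_1>1/2$; so with probability at least $1-2^{-n_0}>1-10^{-9}$ at least one candidate, call it $c^*$, lies in $\mathcal{B}^*$, and this is precisely the term the statement ignores. For any $y\in\mathcal{B}^*$ the relaxed triangle inequality gives $d(c^*,y)\le\gamma\bigl(d(c^*,z^*)+d(z^*,y)\bigr)\le 2\gamma\,\hat r(x,\Delta_1)$. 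Next I would invoke Hoeffding exactly as in Lemma~\ref{lem:radius-bnd}: with probability at least $1-e^{-2n\Delta_1^2}$ at least half of the $n$ fresh samples lie in $\mathcal{B}^*$, so the median distance from $c^*$ to those samples is at most $2\gamma\,\hat r(x,\Delta_1)$. Since Smooth-HD returns the candidate $z=\hat f(x)$ of smallest median distance, $z$ too has median distance at most $2\gamma\,\hat r(x,\Delta_1)\le 2\gamma\,\hat r(x,\Delta)$ (using $\Delta\ge\Delta_1$); thus $\mathcal{B}(z,2\gamma\,\hat r(x,\Delta))$ contains at least half the $n$ samples, and the re-sampling/Hoeffding step carried over unchanged from Algorithm~\ref{alg:smooth} gives, with probability at least $1-e^{-2n\Delta_1^2}$ further, $\mathbb{P}\bigl[f(X)\in\mathcal{B}(z,2\gamma\,\hat r(x,\Delta))\bigr]\ge 1/2-\Delta$. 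A union bound over these two events yields total failure probability at most $2e^{-2n\Delta_1^2}=\alpha_1$. Running the same argument at any perturbed point $x'$ shows every valid output $\hat f(x')$ of Smooth-HD obeys $\mathbb{P}\bigl[f(X')\in\mathcal{B}(\hat f(x'),2\gamma\,\hat r(x',\Delta))\bigr]\ge 1/2-\Delta$; this needs no fresh randomness, so there is no union bound over $x'$.

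Finally I would close as in the proof of Theorem~\ref{thm:main-prob}. By the defining property~(\ref{eq:R_hat}) of $\hat R$, for any $x'$ with $\|x-x'\|_2\le\epsilon_1$ the ball $\mathcal{B}(\hat f(x),\hat R)$ carries mass $>1/2+\Delta$ of $f(x'+\mathcal{P})$, while $\mathcal{B}(\hat f(x'),2\gamma\,\hat r(x',\Delta))$ carries mass $\ge 1/2-\Delta$; the masses sum to more than $1$, so the balls meet at some $y$, and the relaxed triangle inequality once more gives
\[
d(\hat f(x),\hat f(x'))\le\gamma\bigl(d(\hat f(x),y)+d(y,\hat f(x'))\bigr)\le\gamma\bigl(\hat R+2\gamma\,\hat r(x',\Delta)\bigr)\le\gamma(1+2\gamma)\hat R,
\]
the last step using $\hat r(x',\Delta)\le\hat R$, which holds because $\mathcal{B}(\hat f(x),\hat R)$ is itself a ball of mass $>1/2+\Delta$, so the smallest such ball over all centers has radius at most $\hat R$.

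I expect the only real difficulty to be bookkeeping rather than any new idea: keeping the two applications of the relaxed triangle inequality separate (the inner one multiplies $\hat r$ by $2\gamma$, the outer one contributes the leading $\gamma$, and conflating them gives the wrong constant), and being explicit about which failure probabilities are folded into $\alpha_1=2e^{-2n\Delta_1^2}$ versus dropped (the $2^{-n_0}$ candidate-center term). One should also check the harmless inequalities $2\gamma\ge 1$ and $1/2+\Delta_1\ge 1/2$ that make the enlarged median-distance ball big enough to inherit the $1/2-\Delta$ mass bound; these hold for every distance considered here (e.g.\ $\gamma=1$ for genuine metrics, $\gamma=2$ for cosine distance).
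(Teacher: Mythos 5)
Your proof is correct and matches the argument the paper itself sketches for this theorem: at least one of the $n_0$ candidates falls in the optimal $(1/2+\Delta_1)$-mass ball, the relaxed triangle inequality bounds its median distance by $2\gamma\,\hat r(x,\Delta_1)$, and the rest proceeds exactly as in Theorem~\ref{thm:main-approx-prob} with $\beta$ replaced by $2\gamma$ and an extra factor of $\gamma$ from the final triangle-inequality step, yielding $\gamma(1+2\gamma)\hat R$. Your bookkeeping of the failure probabilities (folding the two Hoeffding events into $\alpha_1=2e^{-2n\Delta_1^2}$ and dropping the $2^{-n_0}$ candidate-center term) is precisely the convention the paper adopts.
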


\begin{algorithm}[t]
\caption{Smooth-HD}
\label{alg:smooth-HD}
    \begin{algorithmic}
       \STATE {\bfseries Input:} $x \in \mathbb{R}^k, \sigma, \Delta, \alpha_1$.\\
       \STATE {\bfseries Output:} $z \in M$.\\
       \vspace{0.5mm}
       \STATE Set $C = \{c_i\}_{i=1}^{n_0} \text{ s.t. } c_i \sim f(x + \mathcal{N}(0, \sigma^2 I))$.
       \STATE Set $\Delta_1 = \sqrt{ \ln \left( 2 / \alpha_1 \right) / 2n}$.
       \STATE Sample $Z = \{z_j\}_{j=1}^n \text{ s.t. } z_j \sim f(x + \mathcal{N}(0, \sigma^2 I))$ in batches.
       \STATE For each batch, compute pair-wise distances $d(c_i, z_j)$ for $c_i \in C$ and $z_j$ in the batch.
       \STATE Compute the center $c \in C$ with the minimum median distance to the points in $Z$.
       \STATE Re-sample $Z$ in batches.
       \vspace{0.5mm}
       \STATE Compute $p_{\Delta_1}$.
       \vspace{0.5mm}
       \STATE Set $\Delta_2 = 1/2 - p_{\Delta_1}$.
       \vspace{0.5mm}
       \STATE If $\Delta < \max (\Delta_1, \Delta_2)$, discard $c$ and abstain.
    \end{algorithmic}
\end{algorithm}

\section{Baseline for $\ell_2$-Metric}
In this section, we compare the certificates from center smoothing against a bound derived in~\cite{Wolf2019} for functions like $f$ smoothed by taking the expectation of $f$ under a Gaussian noise.
This bound only applies when the output metric is $\ell_2$.
For a vector-valued function $f$, the change in the function defined as $\mathbb{E}_\delta[f(x + \delta)]$ where $\delta \sim \mathcal{N}(0, \sigma^2 I)$, under an $\ell_2$-perturbation of the input of size $\epsilon_1$, can be bounded by $(\max \|f\|_2 + \min \|f\|_2) \erf{\epsilon_1 / 2\sqrt{2} \sigma}$.
We apply our center smoothing procedure on the autoencoder and image reconstruction models used in section~\ref{sec:exp-tot-var} with $\ell_2$ as the output metric and compare its certificates to the above bound.
Since the minimum $\ell_2$-norm of the output of these models can be zero and we keep $h=\epsilon_1/\sigma = 2$ for these experiments, the change in the output of $\mathbb{E}_\delta[f(x + \delta)]$ can be bounded by $\max \|f\|_2 \erf{1/\sqrt{2}} \leq 0.68 \sqrt{d}$, where $d$ is the number of dimensions of the output space.
For $28 \times 28$ gray-scale MNIST images and $32 \times 32$ RGB CIFAR-10 images, the corresponding bounds are 19.04 and 37.69 respectively.
Figure~\ref{fig:baseline} shows that the certificates obtained for center smoothing remain below the baseline for all the values of $\epsilon_1$ used.
Thus, by observing the neighborhood of an input point, center smoothing can yield better certificates for individual points in the input space than the baseline bound which is a global guarantee.

\begin{figure}[H]
\centering
\begin{subfigure}{.49\textwidth}
  \includegraphics[width=\textwidth, trim={27mm 15mm 15mm 10mm}, clip]{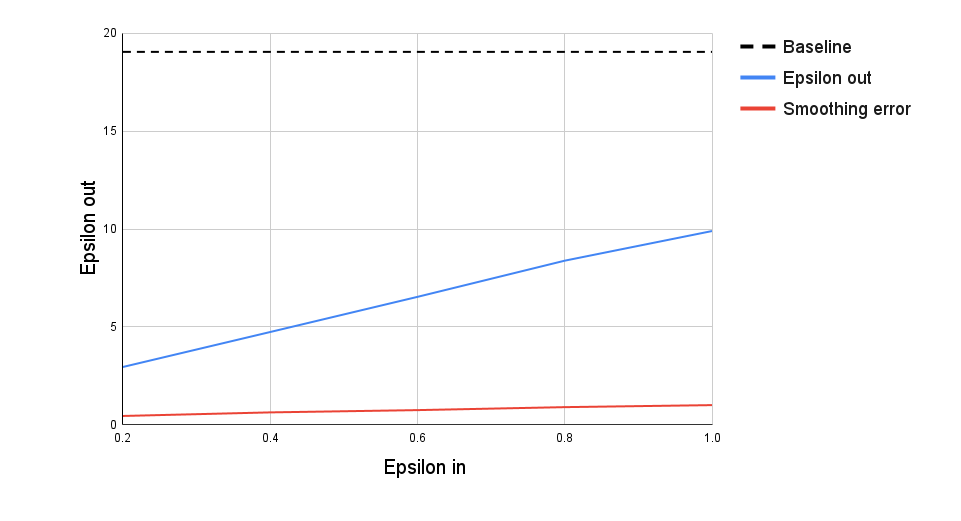}
  \caption{Dimensionality Reduction on MNIST}
\end{subfigure}%
\hfill
\begin{subfigure}{.49\textwidth}
  \centering
  \includegraphics[width=\textwidth, trim={27mm 15mm 15mm 10mm}, clip]{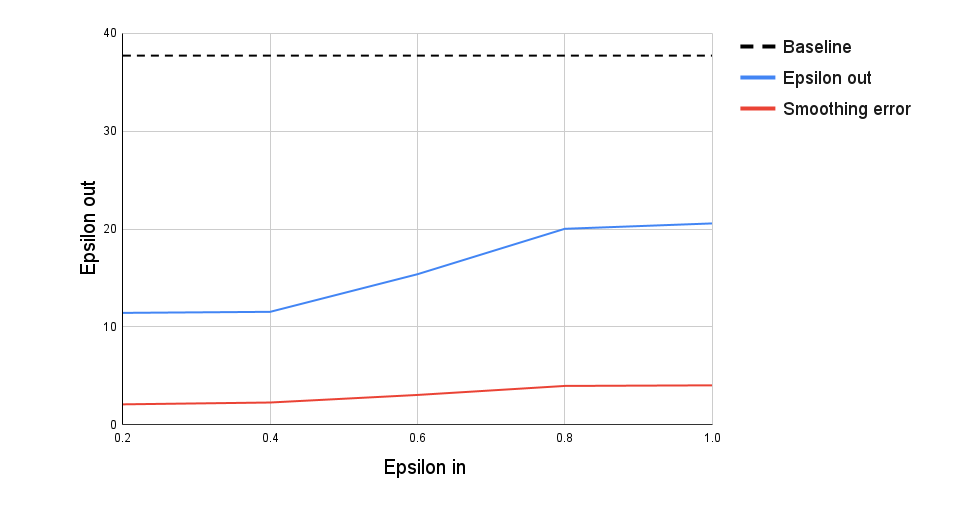}
  \caption{Dimensionality Reduction on CIFAR-10}
\end{subfigure}

\begin{subfigure}{.49\textwidth}
  \vspace{4mm}
  \includegraphics[width=\textwidth, trim={27mm 15mm 15mm 10mm}, clip]{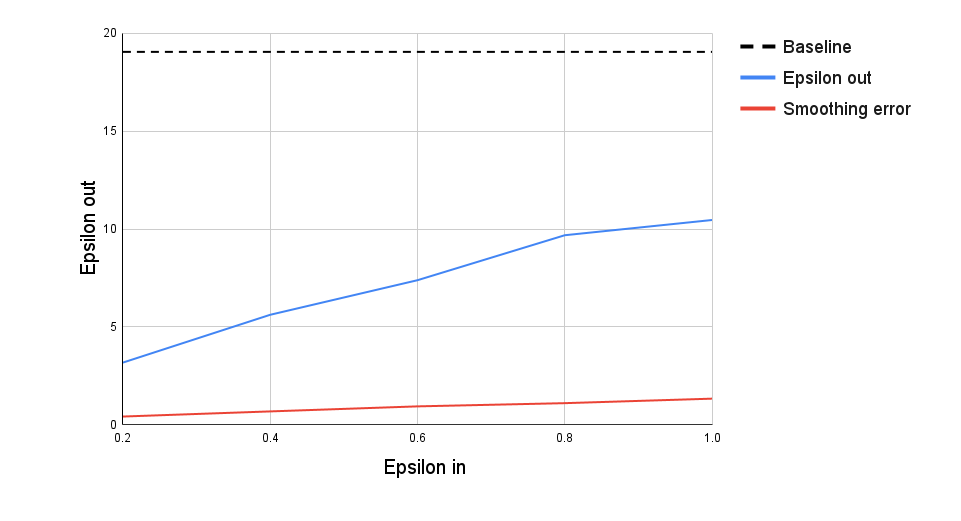}
  \caption{Image Reconstruction on MNIST}
\end{subfigure}%
\hfill
\begin{subfigure}{.49\textwidth}
  \vspace{4mm}
  \includegraphics[width=\textwidth, trim={27mm 15mm 15mm 10mm}, clip]{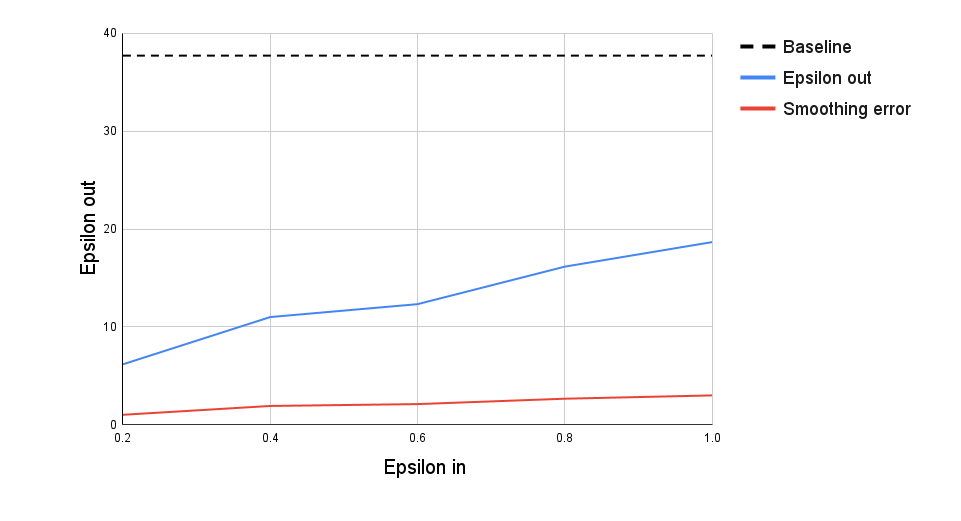}
  \caption{Image Reconstruction on CIFAR-10}
\end{subfigure}
\caption{Comparison with baseline ($h=2$).}
\label{fig:baseline}
\vspace{-4mm}
\end{figure}

\section{Angular Distance}
A common measure for similarity of two vectors $A$ and $B$ is the cosine similarity between them, defined as below:
\[\text{cos}(A, B) = \frac{A \cdot B}{\|A\|_2 \|B\|_2} = \frac{\sum_i A_i B_i}{\sqrt{\sum_j A_j^2} \sqrt{\sum_k B_k^2}}. \]
In order to convert it into a distance, we can compute the angle between the two vectors by taking the cosine inverse of the above similarity measure, which is known as angular distance:
\[AD(A, B) = \text{cos}^{-1} (\text{cos}(A, B)) / \pi.\]
Angular distance always remains between 0 and 1, and 
similar to the total variation distance, angular distance also defines a pseudometric on the output space.
We repeat the same experiments with the same models and hyper-parameter settings as for total variation distance (figure~\ref{fig:ad_plots}).
The results are similar in trend in all the experiments conducted, showing that center smoothing can be reliably applied to a vast range of output metrics to obtain similar robustness guarantees.

\begin{figure}[H]
\centering
\begin{subfigure}{.49\textwidth}
  \includegraphics[width=\textwidth, trim={27mm 17mm 10mm 11mm}, clip]{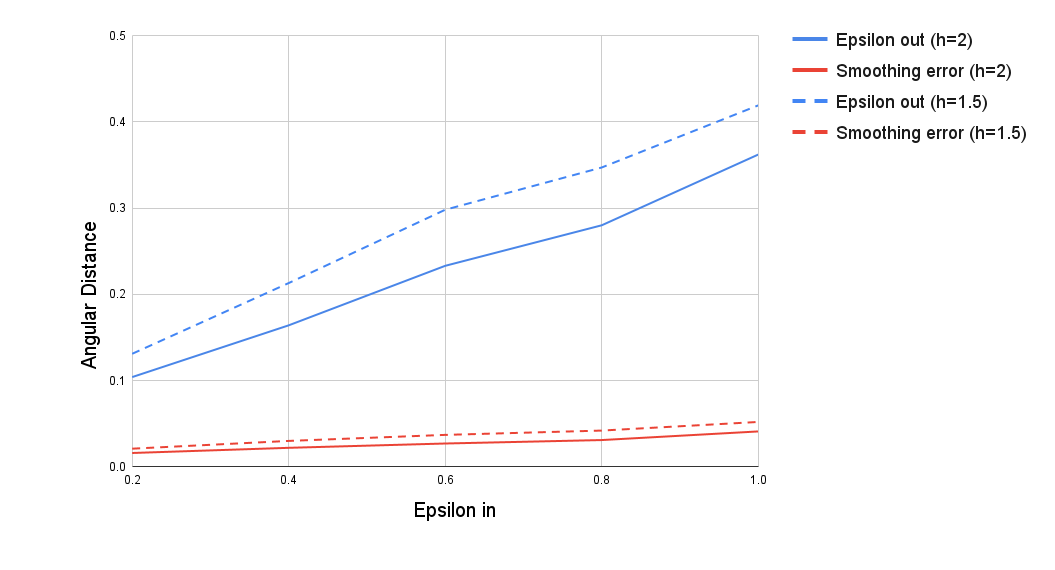}
  \caption{Dimensionality Reduction on MNIST}
\end{subfigure}%
\hfill
\begin{subfigure}{.49\textwidth}
  \centering
  \includegraphics[width=\textwidth, trim={27mm 17mm 10mm 11mm}, clip]{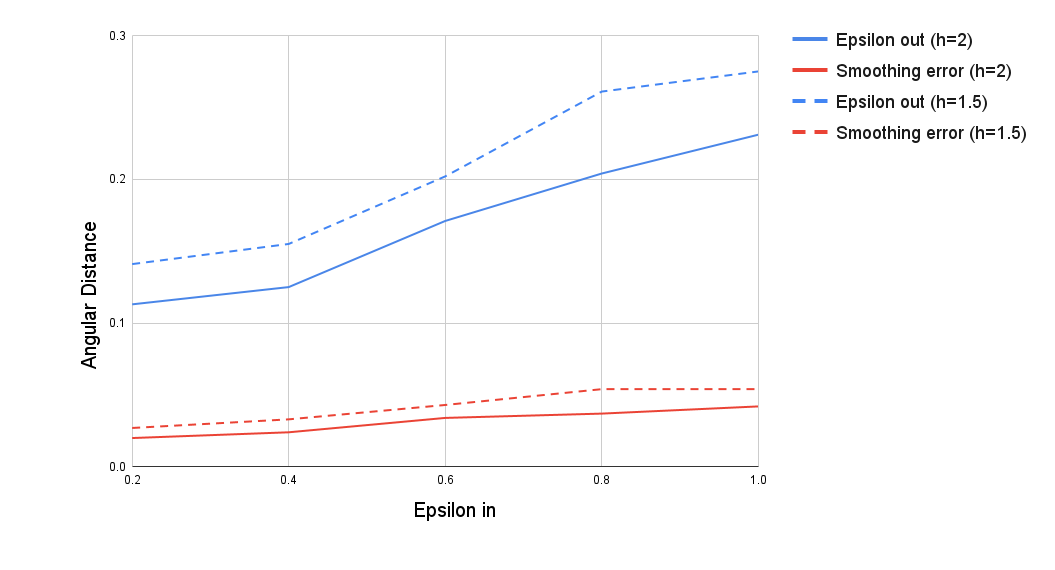}
  \caption{Dimensionality Reduction on CIFAR-10}
\end{subfigure}

\begin{subfigure}{.49\textwidth}
  \vspace{4mm}
  \includegraphics[width=\textwidth, trim={27mm 17mm 10mm 11mm}, clip]{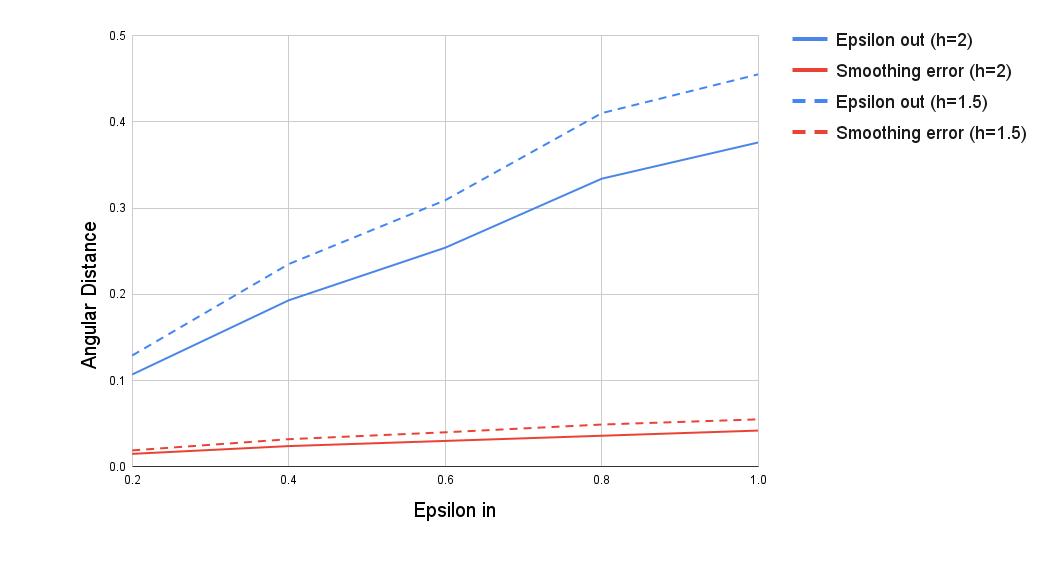}
  \caption{Image Reconstruction on MNIST}
\end{subfigure}%
\hfill
\begin{subfigure}{.49\textwidth}
  \vspace{4mm}
  \includegraphics[width=\textwidth, trim={27mm 17mm 10mm 11mm}, clip]{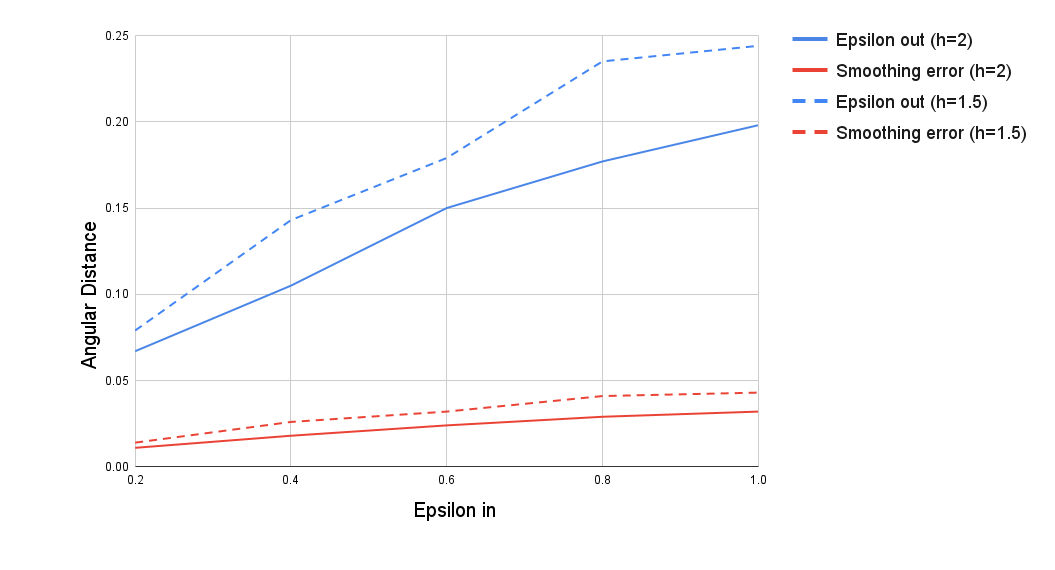}
  \caption{Image Reconstruction on CIFAR-10}
\end{subfigure}
\caption{Certifying Angular Distance}
\label{fig:ad_plots}
\vspace{4mm}
\end{figure}

\section{Effect of Training with Noise}
A common practice in the randomized smoothing literature is to train the base model with noise added to the training examples~\cite{cohen19}.
This helps the model to learn to ignore the smoothing noise and leads to better robustness certificates for classification tasks.
For the total variation certificates in section~\ref{sec:exp-tot-var}, we train the autoencoders and the reconstruction models using a Gaussian noise with the same variance as the one used for prediction and certification.
In this section, we perform an ablation experiment to study the effect of the training noise in the certified output radius of the base model (figure~\ref{fig:ablation_study}).
We observe that both the smoothing error and the certified output radius deteriorate in the absence of training noise.
However, models trained without noise also produce non-trivial certificates.
This shows that both center smoothing and training with noise contribute towards the robustness and performance of the smoothed models.

\begin{figure}[H]
\centering
\begin{subfigure}{.49\textwidth}
  \includegraphics[width=\textwidth, trim={27mm 17mm 10mm 7mm}, clip]{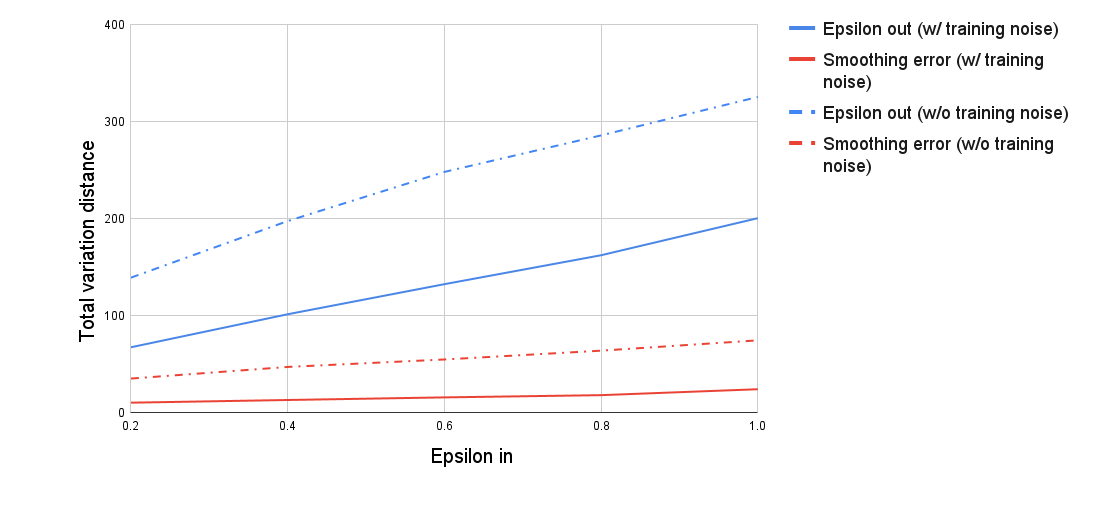}
  \caption{Dimensionality Reduction on MNIST}
\end{subfigure}%
\hfill
\begin{subfigure}{.49\textwidth}
  \centering
  \includegraphics[width=\textwidth, trim={27mm 17mm 10mm 7mm}, clip]{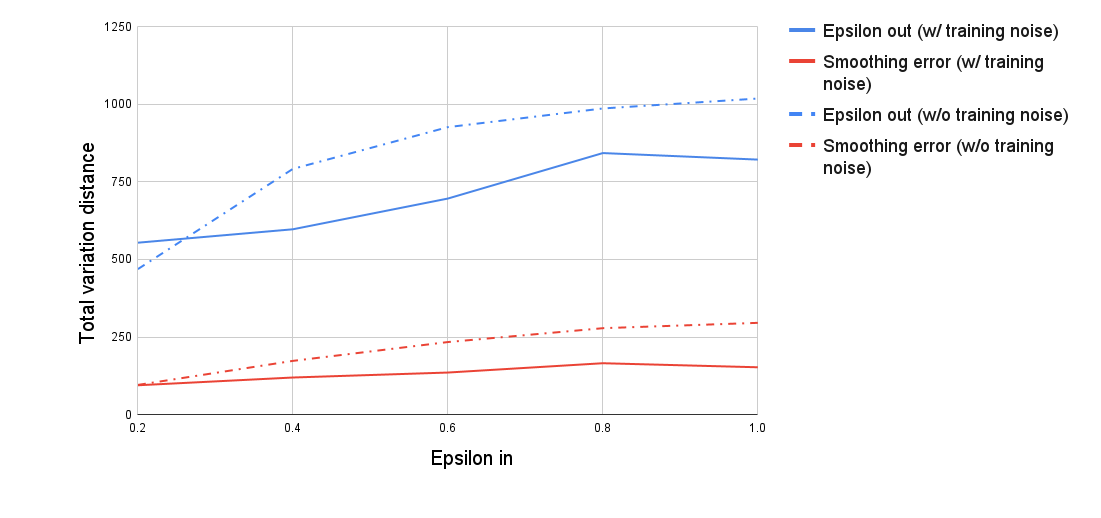}
  \caption{Dimensionality Reduction on CIFAR-10}
\end{subfigure}

\begin{subfigure}{.49\textwidth}
  \vspace{4mm}
  \includegraphics[width=\textwidth, trim={27mm 17mm 10mm 7mm}, clip]{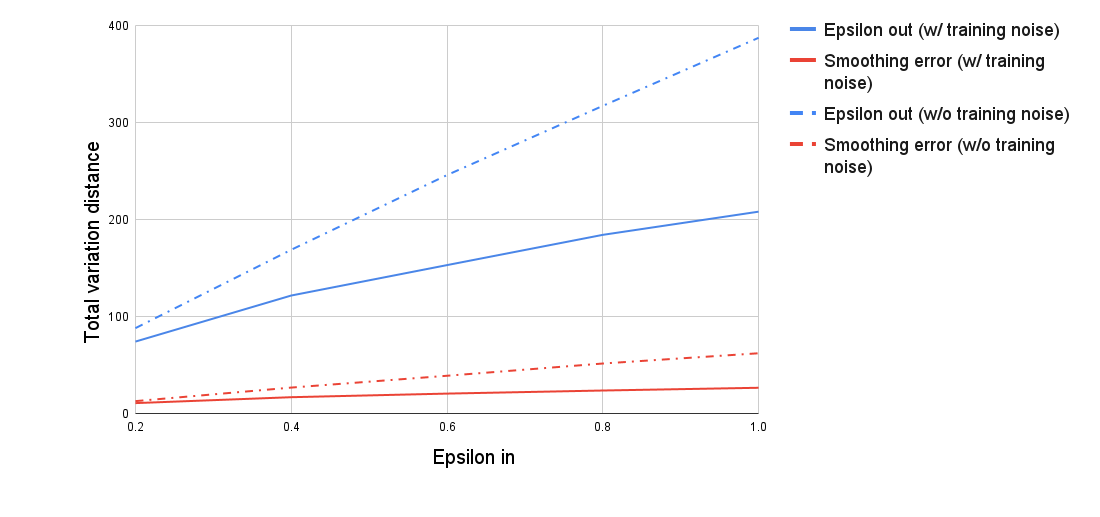}
  \caption{Image Reconstruction on MNIST}
\end{subfigure}%
\hfill
\begin{subfigure}{.49\textwidth}
  \vspace{4mm}
  \includegraphics[width=\textwidth, trim={27mm 17mm 10mm 7mm}, clip]{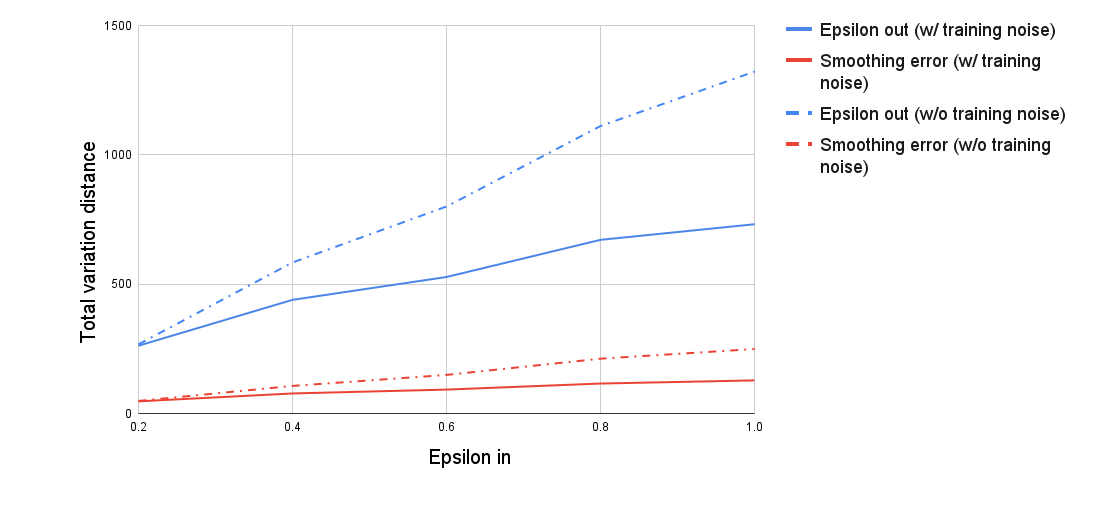}
  \caption{Image Reconstruction on CIFAR-10}
\end{subfigure}
\caption{Impact of training noise on the performance of the robust model and its certificates.}
\label{fig:ablation_study}
\vspace{-4mm}
\end{figure}


\end{document}